%% file: main_arxiv.tex
\documentclass[11pt]{article}
\usepackage[a4paper, top=2.0cm, bottom=2.0cm, left=3cm, right=3cm]{geometry}
\usepackage{amsmath, amssymb, graphicx, url, tcolorbox, hyperref,enumitem}
\usepackage{xcolor}

\definecolor{navy}{RGB}{0, 0, 128}
\usepackage[numbers]{natbib}   % for
\usepackage{footnote}
\makesavenoteenv{tcolorbox}
\usepackage{color,soul}
\usepackage{graphicx}
\usepackage{subcaption}
\usepackage{booktabs}
\usepackage{subcaption} % in preamble
\usepackage{cleveref}
\usepackage{xurl,amsthm}

\theoremstyle{plain} 
\newtheorem{proposition}{Proposition}

\title{Evaluation without Generation: \\ Non-Generative Assessment of Harmful Model Specialization with Applications to CSAM}

\author{
Vinith M. Suriyakumar$^{1}$\thanks{Corresponding authors: vinithms@mit.edu, ashia07@mit.edu},~ 
Ayush Sekhari$^{2}$\thanks{These authors contributed equally.},~ 
Lena Stempfle$^{1}$\footnotemark[2],~ Robertson Wang$^{3}$\footnotemark[2], \\  Michael Simpson$^{3}$,~ 
Rebecca Portnoff$^{3}$,~ 
Marzyeh Ghassemi$^{1}$,~
Ashia C. Wilson$^{1}$\footnotemark[1]
\and
\text{$^{1}$MIT, $^{2}$BU, $^{3}$Thorn} 
}

\date{}

\begin{document}

\maketitle
\thispagestyle{empty}
\begin{abstract}
Auditing the fine-tunes of open-weight generative models for harmful specialization has become a new governance challenge for model hosting platforms. The standard toolkit, \textit{generative evaluation} via curated prompts or red-teaming, does not scale to platform-level auditing and breaks down entirely for domains like child sexual abuse material (CSAM) where generation is legally constrained. This motivates the {\em Evaluation without Generation} problem: assessing model capabilities without producing outputs. In such settings, capability must be inferred from the model's state, either its parameters or internal representations, rather than its outputs. We introduce {\em Gaussian probing}, a method that characterizes how LoRA adaptors functionally perturb a model by measuring its internal responses to a reference ensemble of Gaussian latent states. Unlike raw-weight baselines, Gaussian probing reliably distinguishes benign from harmful specialization without sampling outputs.
We demonstrate effectiveness in high-risk domains, including detecting models specialized for CSAM under realistic constraints. Our results show that Gaussian probing provides a scalable non-generative alternative for evaluating high-risk generative systems and remains robust to weight rescaling, a representative adversarial manipulation.
\end{abstract}

\section{Introduction}
\input{intro_option2}

\section{Problem Formulation}
\label{sec:problem}
\input{capability}

\section{Evaluating Gaussian Probing Against the Desiderata}
\label{Sec:Explore}

\input{sections/adult_sexual_content_detection}

\section{Detecting CSAM Specialization in the Wild}
\label{Sec:Wild}

\input{sections/child_sexual_abuse_content_detection}

\section{Related Work}

\paragraph{Standard safety evaluation for image generation models} Safety evaluation of open-weight generative models typically relies on a set of tools similar to those used for closed-weight generative models. Relying on prompt-based testing~\cite{lee2023holistic, khader2024diffguard, luccioni2023stable} and red teaming~\cite{ganguli2022red, li2025diffuguard, qu2023unsafe} where a large number of prompts are produced, and their output is generated to measure the frequency with which these models generate harmful content. There have been many documented failure modes of generative evaluation, jailbreaking~\cite{jin2025jailbreakdiffbench, wei2023jailbroken, ma2025jailbreaking, yang2024sneakyprompt} and finetuning~\cite{truong2025attacks, zheng2023understanding, li2024shake} attacks being the most notable. Jailbreaking involves either the automated or manual construction of prompts that bypass any safety training that the model has undergone. Finetuning attacks involve finetuning the model on unrelated or related data to undo safety training. Both have also become standard evaluation techniques to assess the efficacy of safety training against adversarial attacks.

\paragraph{Safety interventions for open-weight model deployment} Due to the safety vulnerabilities that have surfaced for open-weight image generation models, there has been a push for developing interventions to address these threats. The main challenge for open-weight models is that input and output classifiers can be easily removed~\cite{rando2022red}. To be effective, interventions need to modify the model itself in ways that are much harder to undo. Concept unlearning~\cite{wu2024erasediff, fuchi2024erasing, zhang2024defensiveunlearningadversarialtraining, ko2024boosting, lu2024mace, fan2023salun, wu2024scissorhands, huang2024recelerreliableconcepterasing, gandikota2023erasing, gandikota2024unified}, a form of post-training where the weights of the models are modified to remove a concept, has become the primary technique of choice for improving the safety of open-weight models. Broadly, these techniques all develop an objective function that represents what the underlying model would be, had it not been trained on the harmful concept, and aims to steer the weights towards this idealized version. CivitAI currently uses a form of concept unlearning to prevent CSAM from being generated in its web-based content generation tool~\cite{lyu2024, civitaiSPM}. Nevertheless, they must still rely on user or external notifications to discover harmful models, so it does not provide a pre-distribution safeguard. In addition, there have been numerous studies showing weaknesses in concept unlearning as a safeguard for open-weight models~\cite{suriyakumar2024unstable, gao2025meta, lu2025concepts}. Overall, concept unlearning remains a promising tool for addressing safety post-distribution on platforms, but does not tackle the core governance channel of our work on pre-distribution screening. While there has been one related work attempting to address evaluation without generation they make assumptions which do not satisfy our problem setting and the motivating application. Specifically, the study introduces prompt agnostic image-free auditing (PAIA)~\cite{yuan2025lurks} which aims to formalize and address evaluation without generation. Yet they require using a small set of images from the target concept that the LoRA was specialized on. This is unlawful in the CSAM domain. Our work formalizes the problem at hand in a more applicable form for AIG-CSAM, proposes an algorithm which only operates on the weights, and shows success in identifying in the wild CSAM specialization.

\paragraph{Weight-space learning} There is a growing line of research investigating model parameters as a data modality, called \textit{weight-space learning}. We lean on this research to construct our baselines and to inspire our idea for Gaussian probing. Neural network weights have been shown to encode information about training data, task structure, and fine-tuning objectives, motivating weight-space learning for model auditing without content generation~\citep{Schurholt2021, unterthiner2021predictingneuralnetworkaccuracy, schurholt2024towards}. Prior work includes hyper-representation learning on neural network weights~\citep{Schurholt2021}, predicting accuracy from weight statistics~\citep{unterthiner2021predictingneuralnetworkaccuracy}, sequential transformer embeddings of weights~\citep{schurholt2024towards}, interpreting customized diffusion models~\citep{Dravid2024Interpreting}, and Deep Linear Probe Generators~\citep{kahana2024deeplinearprobegenerators}. 

Broadly, these approaches can be categorized into mechanistic approaches and functional approaches. Mechanistic approaches typically train a model to learn an embedding of the weights that are then used for classification downstream. At the current moment, most of these methods are simply not effective at the parameter scale of diffusion models, especially with the small sample size of CSAM-specialized LoRAs. Additionally, some make assumptions, such as that the LoRAs modify the same weights across all finetunes. These assumptions are not realistic for the distribution of LoRAs observed in the wild. 

Functional approaches usually start with random probes, as in our technique, but then aim to optimize the probes directly to maximize classification performance. We choose not to explore these probe optimization approaches because they may inadvertently lead to probes that would generate CSAM, and we recommend proceeding with caution when exploring these approaches, since they may constitute \textit{intent} to generate the content, which may be unlawful for CSAM. 

Related advances in mechanistic interpretability further show that internal model components can be systematically analyzed to reveal functional structure and learned representations~\citep{bereska2024mechanisticinterpretabilityaisafety, conmy2023towards}. However, these approaches are not designed to detect specific harmful capabilities or to operate in settings where output generation is prohibited, and there is currently no scalable method to assess whether a model has been specialized for harmful content without generating outputs.

\paragraph{Addressing AIG-CSAM in Generative Models} There has been a limited amount of research and guidance written about addressing AIG-CSAM in generative models. The main source of guidance thus far has been from a collaborative effort spearheaded by Thorn on \textit{Safety by Design}~\cite{thornsafetybydesign, thornprogressreport}. This initiative outlined the different touchpoints in the generative AI lifecycle where AIG-CSAM concerns can arise, the different technical affordances that could be useful to address these concerns, and the status of these across participating organizations. Some of the technical affordances discussed include training data filtering, unlearning, and measuring capabilities via latent spaces without prompting. This work directly addresses the latter. Initial research has started to study the efficacy of potential interventions proposed in this initiative. So far, it has been demonstrated that training data filtering is not fully effective in preventing AIG-CSAM because of concept fusion of benign child generation and adult sexual content capabilities~\cite{cretu2025evaluating}.
At the same time, several open challenges in AI safety have been identified that are relevant for child safety \cite{openproblemschildsafety2026}. Our work is the first to our knowledge to successfully build a method that effectively addresses one of open problems, namely evaluation restrictions on assessing AIG-CSAM capabilities. Nonetheless, additional open problems remain that complicate the development of effective safety mechanisms for AIG-CSAM.

\section{Discussion and Conclusion}
\input{sections/discussion}

\section*{Ethical Considerations}
\label{sec:ethics}

This work presents a new paradigm for assessing the extent to which image generation models have been finetuned on harmful content, specifically CSAM, without any content generation. In this research process and its communication in this paper, numerous steps were taken to conduct it ethically.

First, in the curation of adult sexual content from the internet, we ran Thorn’s Safer detection technology over the image datasets and removed any images flagged as potentially containing CSAM. However, we recognize that it is still possible for adult non-consensual intimate imagery (NCII) to exist in the dataset we used for training and the re-victimization harm that may arise from this. Any presence of such material, regardless of scale, can cause trauma and must not be minimized. We believe that research such as this is critical to building scalable mechanisms by which to prevent the proliferation of models specialized for the creation of CSAM and NCII. We want to highlight that this work is another example of why research into adult content datasets collected consensually are essential. Upon publication of this paper, we will be deleting all of the adult content datasets and LoRAs specialized on adult content.

Second, we intentionally omitted and abstracted significant details about the LoRAs specialized for CSAM generation on which we evaluate our methodology. We do not disclose details on their whereabouts, verification, names, or how they were obtained. In doing so, we ensure that no information is provided that would enable individuals to locate these models beyond what is publicly available, thereby reducing misuse risk and preserving operational security. We affirm that we never possessed, accessed, attempted to, or generated CSAM. We also did not train any LoRAs on CSAM ourselves or possess CSAM specialized LoRAs. All such data remained solely with authorized entities, and all testing was conducted in compliance with applicable laws and organizational safeguards. In order to justify our approach and to show its efficacy in detection, we have evaluated the approach on non-CSAM tasks, as demonstrated in the experimental section of this paper.

Third, we believe that publication of these results is beneficial to the AI safety ecosystem as a whole and will help prevent additional harm. By demonstrating a method that can assess abuse capabilities in generative models without having to generate illicit content, we provide model hosting platforms, AI developers, policy makers, and safety researchers with a legally viable tool for scalable proactive detection of abuse enabled generative models. We aim to support continued innovation while equipping stakeholders with technology to detect and curb the proliferation of such models.

\section*{Responsible Disclosure System}

At the time of release of this work, there may be organizations that start to deploy this method in partnership with Thorn. As such, we want to encourage responsible disclosure of vulnerabilities of this system. Given the high-risk nature of CSAM, as a community we should aim to prevent providing individuals with techniques to obfuscate these detection systems before they have been addressed. At the same time, we view follow-up research to improve and identify vulnerabilities in this system as essential to making non-generative auditing robust. Therefore, we strongly encourage researchers to responsibly disclose vulnerabilities to our team at \url{ewg@mit.edu}. Our team will engage with reports in a timely manner and work with you to patch the vulnerability. After the patch has been deployed to all current users, we encourage publication of the vulnerability to advance our collective knowledge of the field. We encourage researchers to submit vulnerability disclosures with the following included:

\begin{itemize}
    \item A detailed description of the vulnerability, including how it was discovered, its immediate impact, and whether an immediate patch has been identified
    \item A proof-of-concept notebook demonstrating how the vulnerability can be exploited
    \item Information about plans for public disclosure including timelines
\end{itemize}

\section*{Acknowledgements}

VMS was supported by a Bridgewater AIA Labs Research Fellowship. We thank MIT ORCD for providing technical support and managing the GPU clusters used to perform our LoRA training and probe extraction.

\newpage

\bibliography{references}
\bibliographystyle{plainnat}

\newpage
\appendix

\section{Consistency of Gaussian Probing}
\label{App:Theory}
The population object underlying Gaussian probing is given by~\eqref{eq:pop_probe}, and the empirical representation~\eqref{eq:emp_probe} is its Monte Carlo approximation. The following result shows that this approximation is statistically well behaved and clarifies the condition under which it can support correct classification.

\begin{proposition}[Consistency of Gaussian probing]
\label{prop:gp_consistency}
Suppose that $\bar{H}(\Delta;\nu)$ has finite second moment for every 
fixed adaptor $\Delta$. Then
\[
\widehat{\Psi}_m(\Delta)
\;\xrightarrow[m\to\infty]{\mathrm{a.s.}}\;
\Psi(\Delta).
\]
If, in addition, the specialization classes are separated by a positive 
margin in $\Psi$-space, then any margin-respecting plug-in classifier 
built on $\widehat{\Psi}_m(\Delta)$ is asymptotically correct.
\end{proposition}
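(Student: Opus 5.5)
We read the definitions \eqref{eq:pop_probe} and \eqref{eq:emp_probe} as follows. There is a reference ensemble $\nu$ of Gaussian latent states $z_1,\dots,z_m \overset{\mathrm{iid}}{\sim} \nu$, drawn independently of the adaptor. $\bar{H}(\Delta;z)$ is the (vector-valued, possibly flattened and pooled) internal response of the adapted model relative to the base model at probe $z$. The population probe is
\[
\Psi(\Delta)=\mathbb{E}_{z\sim\nu}\bigl[\bar{H}(\Delta;z)\bigr],
\]
and the empirical probe is $\widehat{\Psi}_m(\Delta)=\frac1m\sum_{i=1}^m \bar{H}(\Delta;z_i)$.

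With this reading, the first claim is an instance of the strong law of large numbers, applied coordinatewise in finite dimension. Fix $\Delta$. The random vectors $X_i:=\bar{H}(\Delta;z_i)$ are i.i.d. because the $z_i$ are i.i.d. and $\bar{H}(\Delta;\cdot)$ is a fixed measurable map. This measurability holds because the network is a composition of continuous operations. Finite second moment implies $\mathbb{E}\|X_1\|<\infty$ by Cauchy--Schwarz, so Kolmogorov's SLLN gives $\widehat{\Psi}_m(\Delta)\to\Psi(\Delta)$ almost surely in each coordinate, and hence in norm. The second moment is stronger than needed here. We will also record it as giving the rate $\mathbb{E}\|\widehat{\Psi}_m-\Psi\|^2=\operatorname{tr}\mathrm{Cov}(X_1)/m$, which is convenient for the classification part.

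For the classification claim we first formalize the hypotheses. A positive margin means there are classes $\mathcal{C}_1,\dots,\mathcal{C}_K$ of adaptors and a rule $g$ on $\Psi$-space satisfying the following: for each class $k$ and each $\Delta\in\mathcal{C}_k$, every point $x$ with $\|x-\Psi(\Delta)\|<\gamma$ has $g(x)=k$, for some $\gamma>0$. A margin-respecting plug-in classifier is $\widehat{c}_m(\Delta)=g(\widehat{\Psi}_m(\Delta))$. A concrete example is the nearest class-set rule with sets at distance $>2\gamma$. Then for fixed $\Delta\in\mathcal{C}_k$, a.s. convergence yields a random $M$ with $\|\widehat{\Psi}_m(\Delta)-\Psi(\Delta)\|<\gamma$ for all $m\ge M$. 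So $\widehat{c}_m(\Delta)=k$ eventually, almost surely, which is asymptotic correctness. Quantitatively, Chebyshev with the variance bound gives
\[
\Pr\bigl(\widehat{c}_m(\Delta)\neq k\bigr)\le \operatorname{tr}\mathrm{Cov}(X_1)/(m\gamma^2).
\]

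The main obstacle is interpretive rather than technical. We must pin down which statement is intended: pointwise in $\Delta$, or uniform over a finite audited collection of adaptors. For a finite collection $\Delta^{(1)},\dots,\Delta^{(N)}$ we would use the same probe draw for every adaptor, as the method does. A countable intersection of probability-one events, or a union bound in the Chebyshev form, then gives simultaneous correctness. A uniform statement over infinite classes would require an additional uniform law of large numbers, such as a Glivenko--Cantelli condition, which we would flag as outside the stated hypotheses. We would also note explicitly that if the classifier $g$ is itself learned from data, asymptotic correctness is claimed only for a fixed $g$ with the margin property.
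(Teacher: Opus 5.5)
Your proposal is correct and follows the paper's own argument: the strong law of large numbers applied to the i.i.d.\ probe responses gives $\widehat{\Psi}_m(\Delta)\to\Psi(\Delta)$ almost surely. Once the estimate is within the margin of $\Psi(\Delta)$ it lies in the same decision region, so the plug-in classifier is eventually correct. Your explicit formalization of the margin, the Chebyshev rate $O(1/(m\gamma^2))$, and the remark on simultaneous correctness over a finite audited collection add precision that the paper's sketch leaves implicit. The only slip is notational: in the paper $\nu$ is the probe itself, and $\bar H(\Delta;\nu)$ is the adapted model's pooled activation rather than a difference from the base model, which does not affect the argument.
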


\noindent{\em Proof Sketch.}
The convergence $\widehat{\Psi}_m(\Delta) \to \Psi(\Delta)$ follows from the strong law of large numbers, since the probes are sampled independently from the Gaussian reference distribution. If the classes are separated in the population representation, then for sufficiently large $m$ the empirical representation lies in the same decision region as the population representation, yielding asymptotically correct 
classification.$\qed$

\section{Probe Classifier Implementation}

\paragraph{Extraction} For extracting the probes from the models we fix the internal randomness using a specific random seed across all LoRAs. This is important so that all of the representations we extracted start from the same initial Gaussian noise. Throughout our experiments in Section~\ref{Sec:Explore} we sample 1024 probes unless stated otherwise. In our experiments in Section~\ref{Sec:Wild} we sample 512 probes for SD 1.5 and SDXL 1.0. Due to the significantly larger size of FLUX.1-dev and computational restrictions when interacting with CSAM-specialized LoRAs for operational security, we only sample 4 probes. We do so over 30 timesteps. So each LoRA is mapped to a tensor in the shape of (N, T, D) where N is the number of probes, T is the number of timesteps, and D is the dimension of the representation we extract. We do so by simply implementing hooks in that capture the representations during forward passes up to the layers of interest. To prevent randomness due to the solver introducing noise, we use a deterministic solver for each architecture when extracting the probes. Specifically, for SD1.5 and SDXL 1.0, which are denoising diffusion implicit models (DDIM) that use a deterministic ODE instead of an SDE. For FLUX.1-dev, the default flow-matching scheduler with dynamic shift is deterministic. 

We note that probe tensors are not guaranteed to be numerically identical across hardware / micro-batch configurations due to bf16 non-determinism. However, this is a non-issue given our theoretical intuition for the method because the noise from non-determinism is class-agnostic and the classifier is trained to distinguish distributional moments rather than exact values. We empirically validate this on all three architectures across classes by training linear classifiers to distinguish between our averaged probes on different subsets of the entire probe set. We find that across all three architectures, classes, and all layer choices, classifiers are unable to distinguish between averaged probes from different initializations (Figures~\ref{fig:subset_dist_sd15}, \ref{fig:subset_dist_sdxl}, \ref{fig:subset_dist_flux1dev}).

\paragraph{Classifier} For the classifier we split the dataset of probes based on the LoRAs into a 70\%, 10\%, 20\% train, validation, and test train split. We train a linear classifier for 200 epochs, with batch size 32, and learning rate $1^{-3}$. We take an argmax over the softmax probabilities to assign the class between SFW, NSFW, and CSAM.

\begin{figure}[h]
    \centering
    \includegraphics[width=0.8\linewidth]{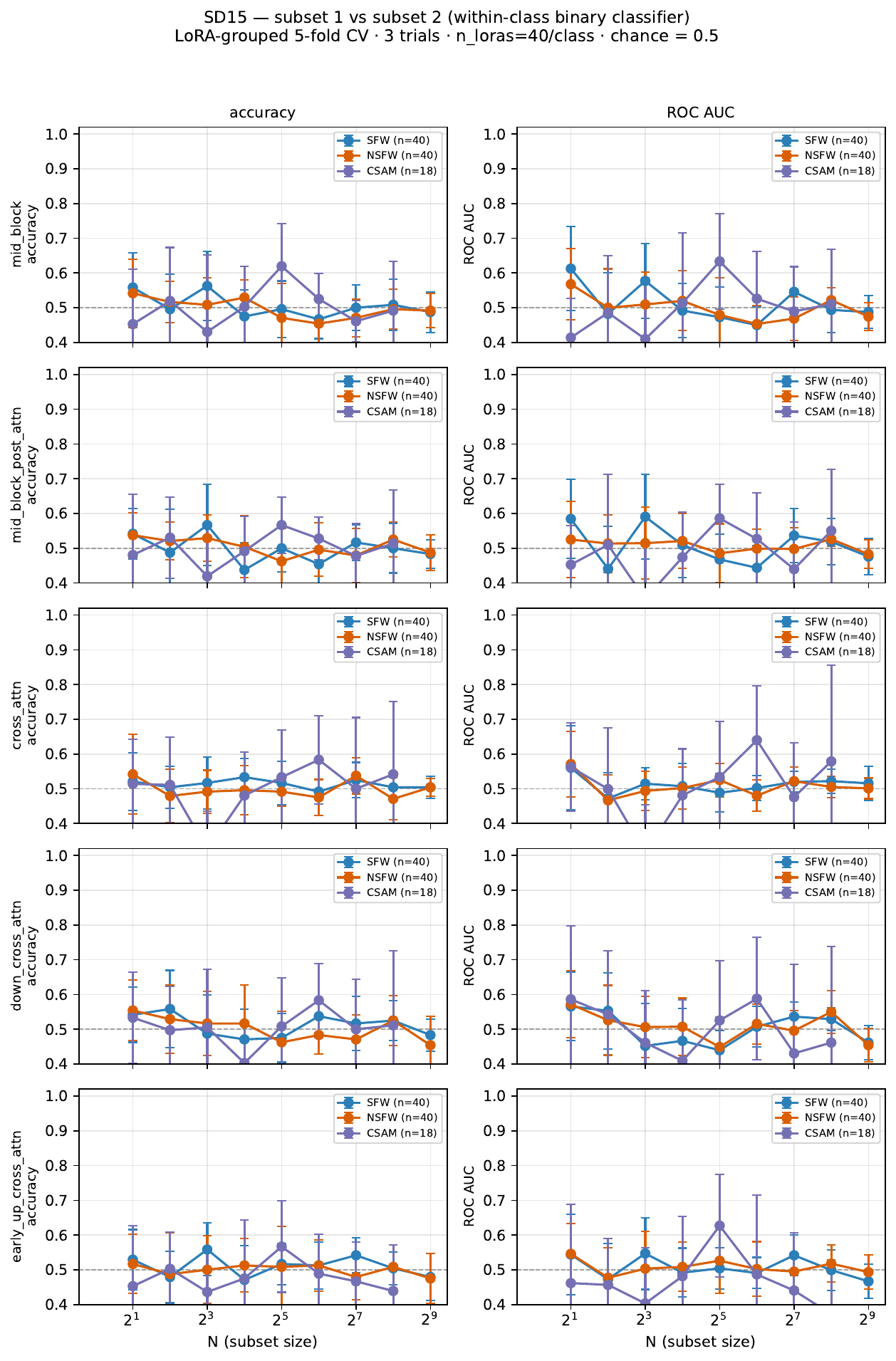}
    \caption{Linear classifier trained on random subsets of varying sizes from the total pool of probes for a sample of LoRAs from each class for SD1.5. Across all layers we extract the classifier is unable to distinguish the averaged probe across any subset size. Thus indicating that the same initialization does not need to be used across all LoRAs or classes.}
    \label{fig:subset_dist_sd15}
\end{figure}

\begin{figure}[h]
    \centering
    \includegraphics[width=0.8\linewidth]{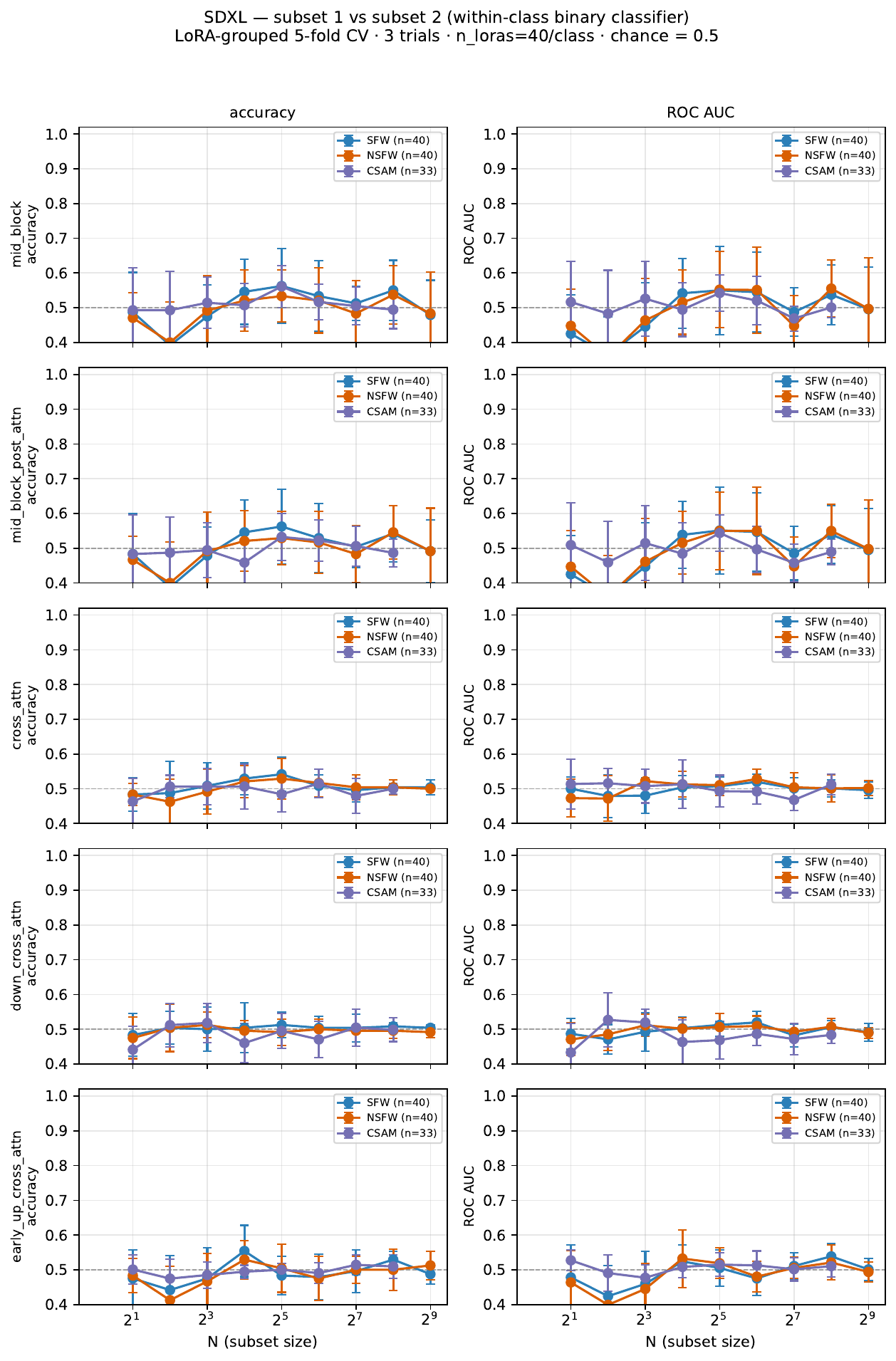}
    \caption{Linear classifier trained on random subsets of varying sizes from the total pool of probes for a sample of LoRAs from each class for SDXL 1.0. Across all layers we extract the classifier is unable to distinguish the averaged probe across any subset size. Thus indicating that the same initialization does not need to be used across all LoRAs or classes.}
    \label{fig:subset_dist_sdxl}
\end{figure}

\begin{figure}[h]
    \centering
    \includegraphics[width=0.8\linewidth]{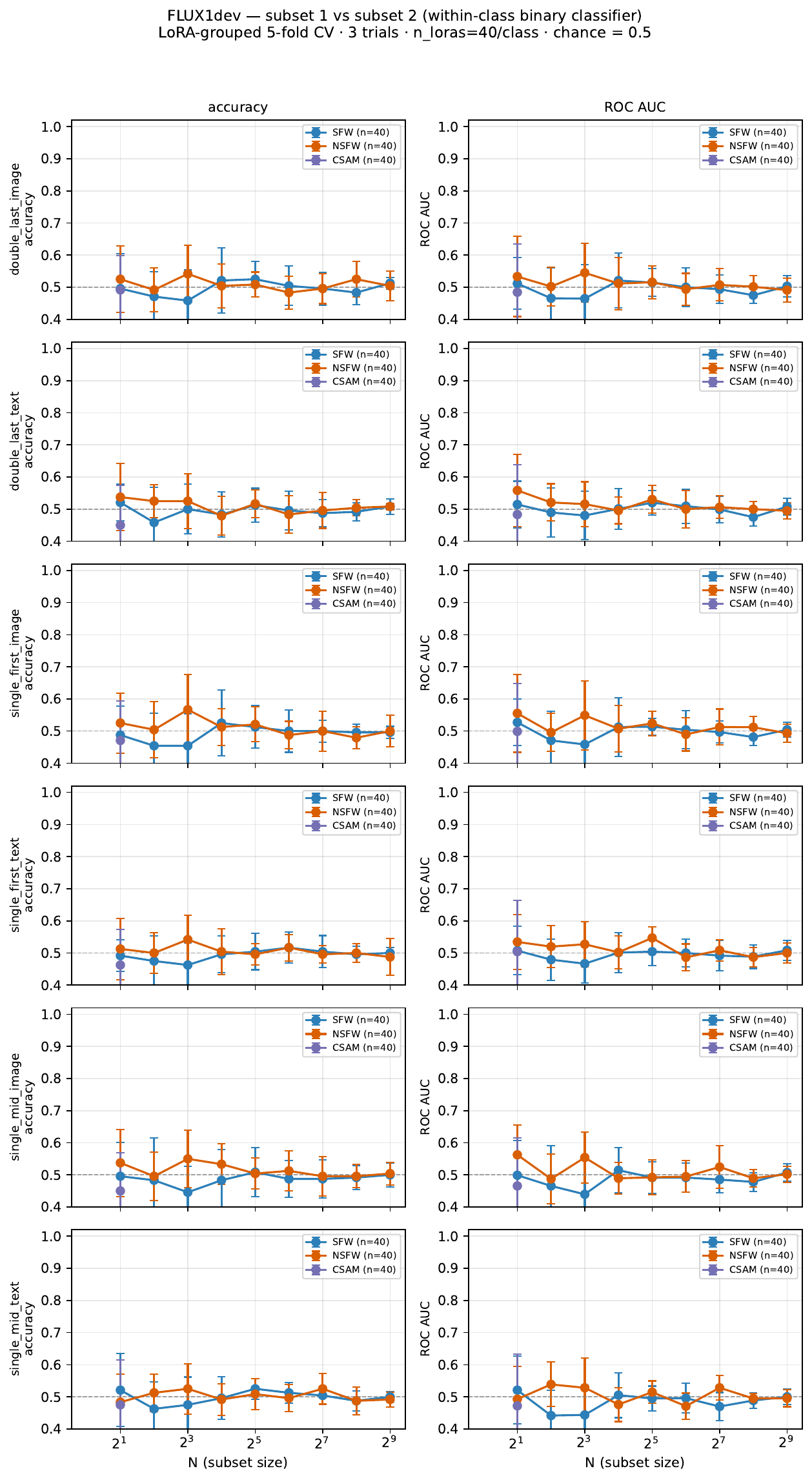}
    \caption{Linear classifier trained on random subsets of varying sizes from the total pool of probes for a sample of LoRAs from each class for FLUX.1-dev. Across all layers we extract the classifier is unable to distinguish the averaged probe across any subset size. Thus indicating that the same initialization does not need to be used across all LoRAs or classes.}
    \label{fig:subset_dist_flux1dev}
\end{figure}

\clearpage

\section{Structured Study}
\label{app:structured}

We present additional results from our structured study of Gaussian probing. Primarily, the standard cross validation results, additional metrics for the leave dataset out results, and ablations on both the impact of using probes from different modules of the model, and ablations on the ensembling strategy for these different modules. 

\subsection{LoRA Training Details}

For Section~\ref{Sec:Explore} when creating the controlled study we trained our own LoRAs as described in the experimental section. For LoRA training we randomly picked the hyperparameters from the following lists: rank = $\{4, 8, 16, 32, 64, 128, 256\}$, learning rate = $\{1E-3, 1E-4, 1E-5\}$, module = $\{\text{text encoder}, \text{latent diffusion model}, \text{both}\}$, and alpha = $\{4, 8, 16, 32, 64, 128, 256\}$, number of finetuning steps = $\{1, 10, 100, 1000\}$, number of examples in finetuning dataset = $\{1, 5, 10, 25, 100, 250, 500, 1000\}$. We randomly assign a configuration of hyperparameters to each LoRA for training to prevent any confounds being exploited during prediction. We use both the diffusers library~\cite{von-platen-etal-2022-diffusers} and the kohya library~\cite{kohya_ss} to capture the common training setups in the wild. 

\subsection{Standard Cross Validation Results}
\label{app:structured_cv}

In this section we present the standard cross validation results for the controlled study. As mentioned in Section~\ref{Sec:Explore} we view these results as the ceiling of the possible performance since we eliminate many of the confounds that temper performance in the wild. These include differences in training hyperparameters between concepts, LoRAs being produced from different base checkpoints, different training libraries being used, a long tail of different ranks and layer configurations that are updated. 

\subsubsection{Stable Diffusion 1.5}

We present the standard CV results across all metrics over 5 folds in Figures \ref{fig:sd15_standard_cv} and \ref{fig:sd15_standard_cv_bars} for SD 1.5.

\begin{figure}[h]
    \centering
    
    \begin{subfigure}[t]{0.45\textwidth}
        \centering
        \includegraphics[width=\linewidth]{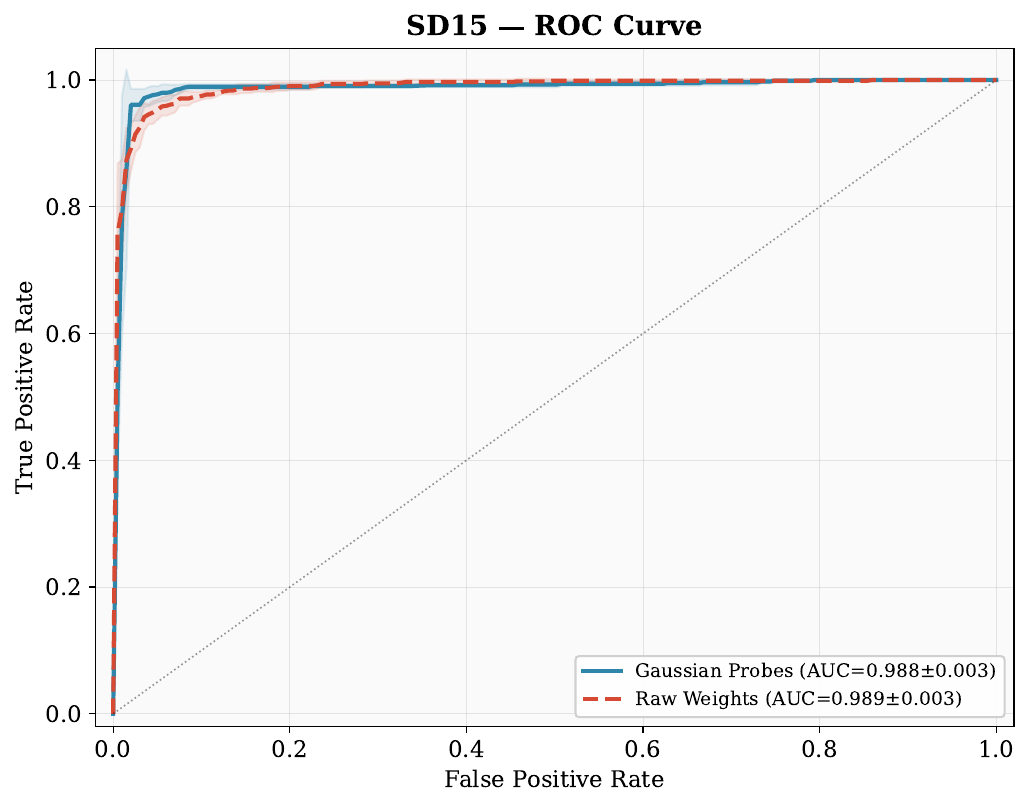}
        \caption{ROC curve of both weight projection and gaussian probing methods in identifying SFW vs NSFW LoRAs for SD 1.5. Both methods perform well showing almost perfect classification. We attribute this to controlling many confounds that appear in the wild, making the task much easier.}
        \label{fig:panel_a}
    \end{subfigure}
    \hfill
    \begin{subfigure}[t]{0.45\textwidth}
        \centering
        \includegraphics[width=\linewidth]{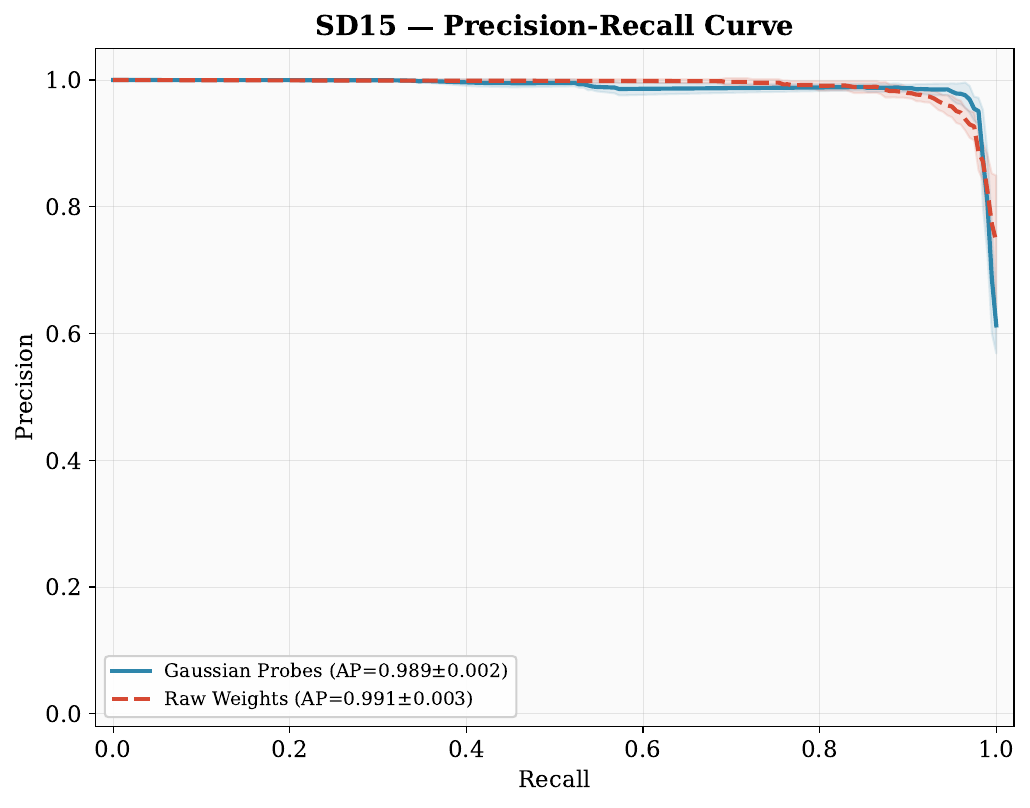}
        \caption{PR curve of both weight projection and gaussian probing methods in identifying SFW vs NSFW LoRAs for SD 1.5. Both methods perform well showing almost perfect classification. We attribute this to controlling many confounds that appear in the wild, making the task much easier.}
        \label{fig:panel_b}
    \end{subfigure}
    \caption{
    Standard cross-validation results for SD 1.5 on controlled adult sexual content prediction task. 
    }
    \label{fig:sd15_standard_cv}
\end{figure}

\begin{figure}[h]
    \centering
    \includegraphics[width=\linewidth]{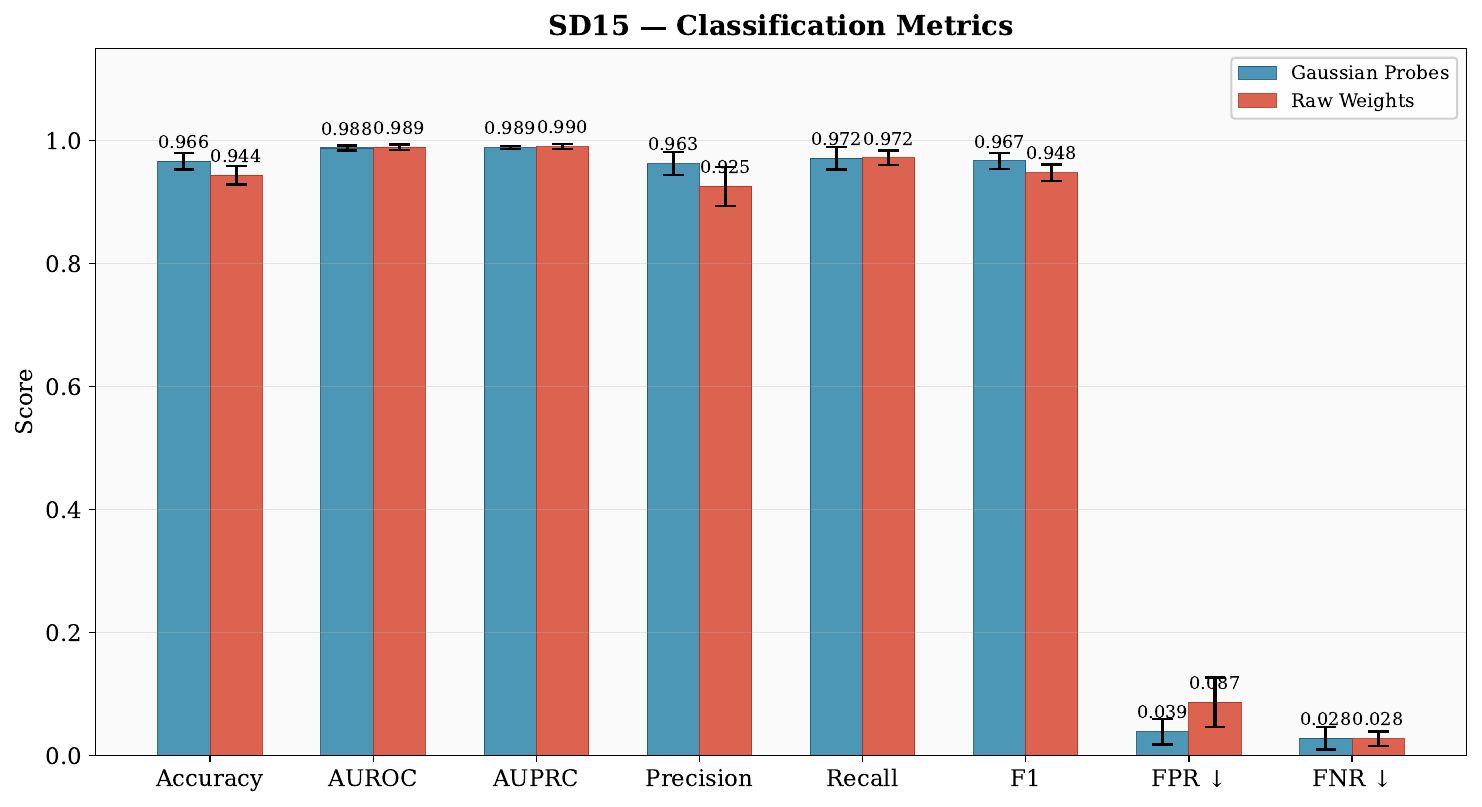}
    \caption{Additional metrics for standard cross-validation results for SD 1.5 on controlled adult sexual content prediction task. }
    \label{fig:sd15_standard_cv_bars}
\end{figure}

\subsubsection{Stable Diffusion XL 1.0}

We present the standard CV results across all metrics over 5 folds in Figures \ref{fig:sdxl_standard_cv} and \ref{fig:sdxl_standard_cv_bars} for SDXL 1.0.

\begin{figure}[h]
    \centering
    
    \begin{subfigure}[t]{0.45\textwidth}
        \centering
        \includegraphics[width=\linewidth]{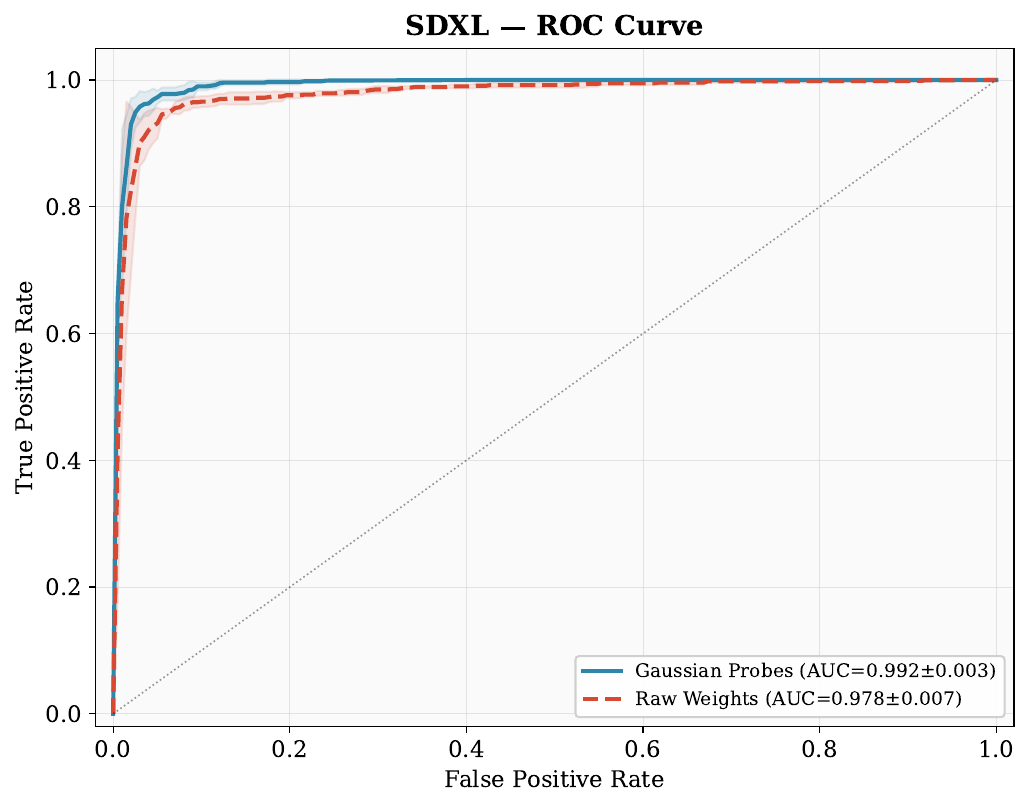}
        \caption{ROC curve of both weight projection and gaussian probing methods in identifying SFW vs NSFW LoRAs for SDXL 1.0. Both methods perform well showing almost perfect classification. We attribute this to controlling many confounds that appear in the wild, making the task much easier.}
        \label{fig:panel_a}
    \end{subfigure}
    \hfill
    \begin{subfigure}[t]{0.45\textwidth}
        \centering
        \includegraphics[width=\linewidth]{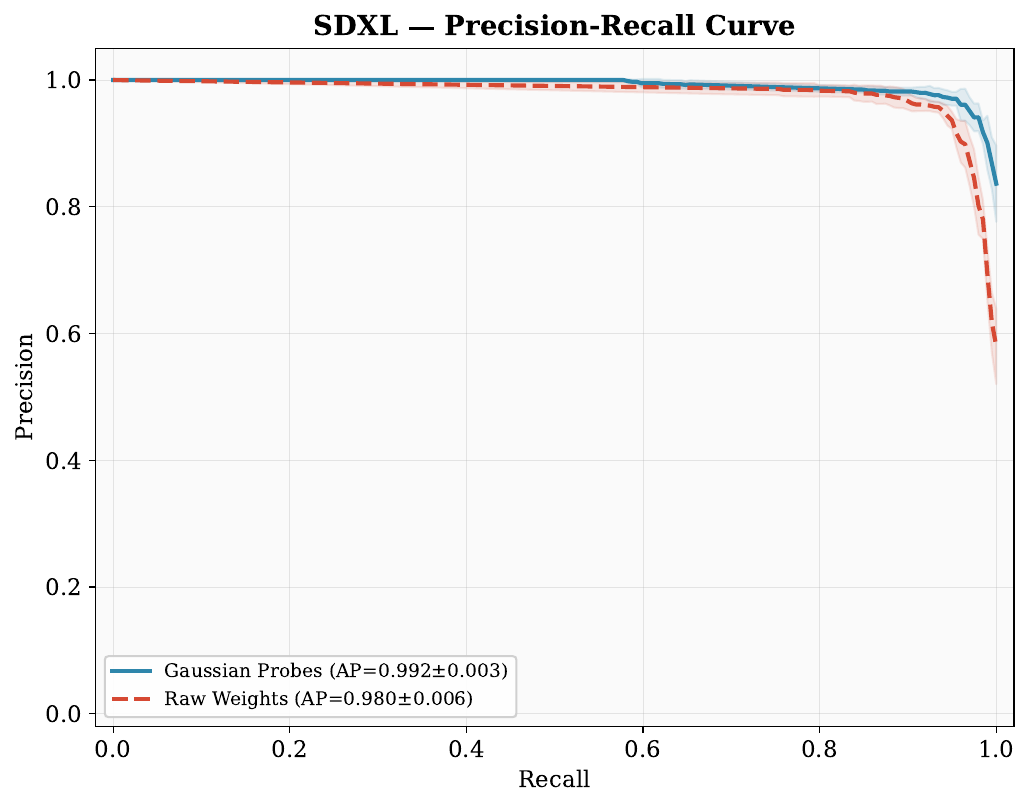}
        \caption{PR curve of both weight projection and gaussian probing methods in identifying SFW vs NSFW LoRAs for SDXL 1.0. Both methods perform well showing almost perfect classification. We attribute this to controlling many confounds that appear in the wild, making the task much easier.}
        \label{fig:panel_b}
    \end{subfigure}
    \caption{
    Standard cross-validation results for SDXL 1.0 on controlled adult sexual content prediction task. 
    }
    \label{fig:sdxl_standard_cv}
\end{figure}

\begin{figure}[h]
    \centering
    \includegraphics[width=\linewidth]{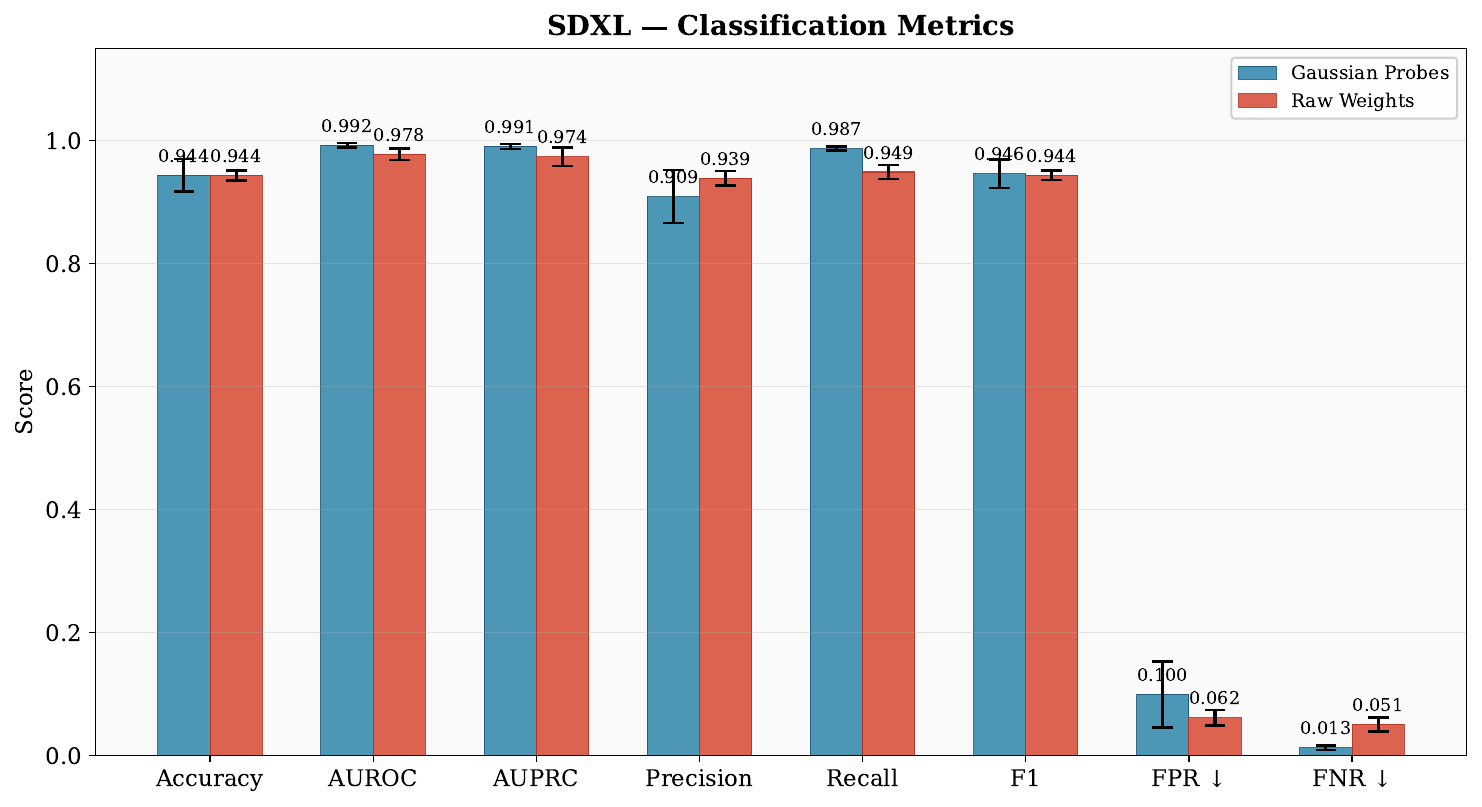}
    \caption{Additonal metrics for standard cross-validation results for SDXL 1.0 on controlled adult sexual content prediction task.  }
    \label{fig:sdxl_standard_cv_bars}
\end{figure}

\subsubsection{FLUX.1-dev}

We present the standard CV results across all metrics over 5 folds in Figures \ref{fig:flux1dev_standard_cv} and \ref{fig:flux1dev_standard_cv_bars} for FLUX.1-dev.

\begin{figure}[h]
    \centering
    
    \begin{subfigure}[t]{0.45\textwidth}
        \centering
        \includegraphics[width=\linewidth]{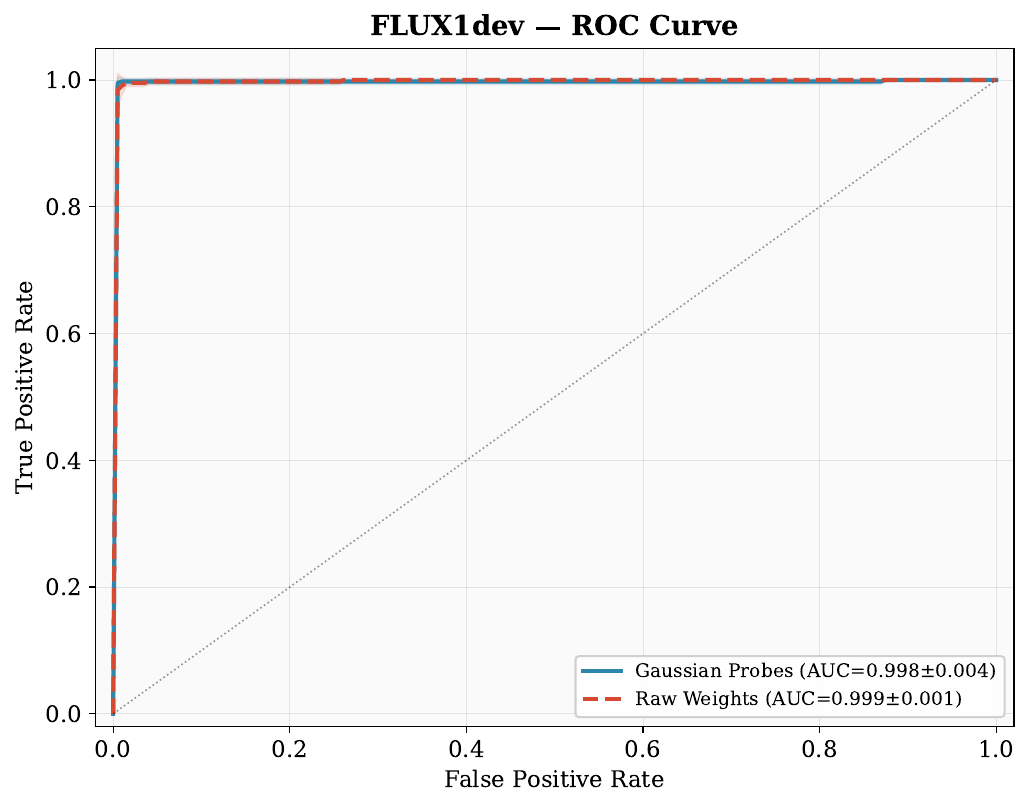}
        \caption{ROC curve of both weight projection and gaussian probing methods in identifying SFW vs NSFW LoRAs for FLUX.1-dev. Both methods perform well showing almost perfect classification. We attribute this to controlling many confounds that appear in the wild, making the task much easier.}
        \label{fig:panel_a}
    \end{subfigure}
    \hfill
    \begin{subfigure}[t]{0.45\textwidth}
        \centering
        \includegraphics[width=\linewidth]{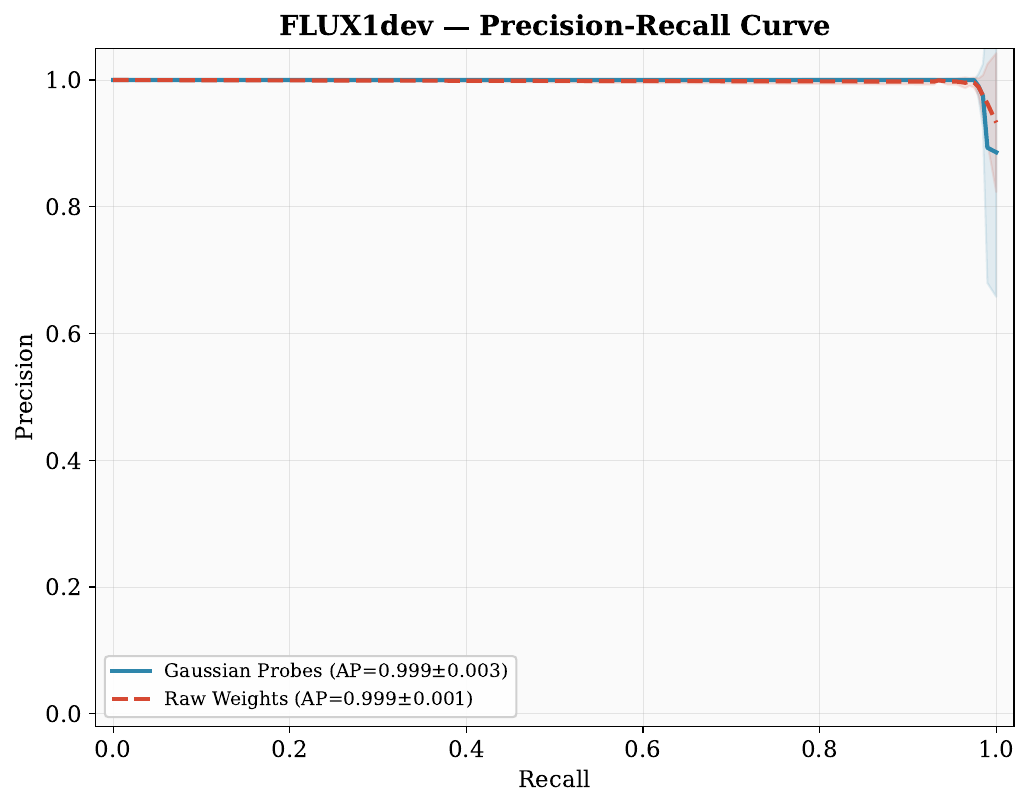}
        \caption{PR curve of both weight projection and gaussian probing methods in identifying SFW vs NSFW LoRAs for FLUX.1-dev. Both methods perform well showing almost perfect classification. We attribute this to controlling many confounds that appear in the wild, making the task much easier.}
        \label{fig:panel_b}
    \end{subfigure}
    \caption{
    Standard cross-validation results for SDXL FLUX.1-dev on controlled adult sexual content prediction task. 
    }
    \label{fig:flux1dev_standard_cv}
\end{figure}

\begin{figure}[h]
    \centering
    \includegraphics[width=\linewidth]{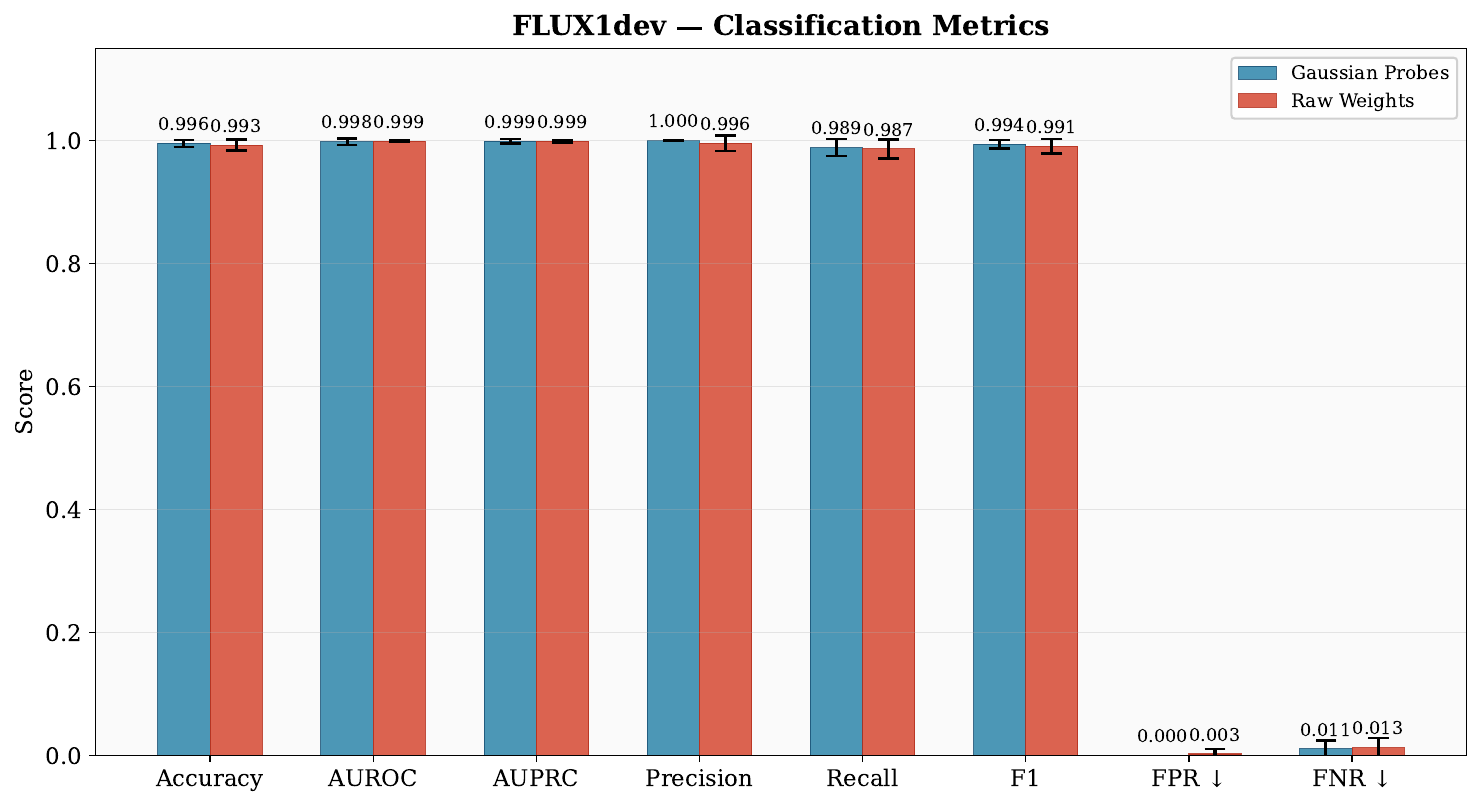}
    \caption{Additional metrics for standard cross-validation results for SDXL FLUX.1-dev on controlled adult sexual content prediction task. }
    \label{fig:flux1dev_standard_cv_bars}
\end{figure}

\subsection{Ensemble and Module Probe Ablation}

In this section, we present the impact of ensembling the representations across the different modules in the model (i.e. the text encoder, the latent diffusion model, or both) in Figure~\ref{fig:module_ablation}. We show that the ensemble is most important for SD 1.5 and there are diminishing returns as the size of the model increases (i.e. as we go from SDXL 1.0 to FLUX.1-dev).

\label{app:controlled_module_ensemble}
\begin{figure}[h]
    \centering
    \includegraphics[width=\linewidth]{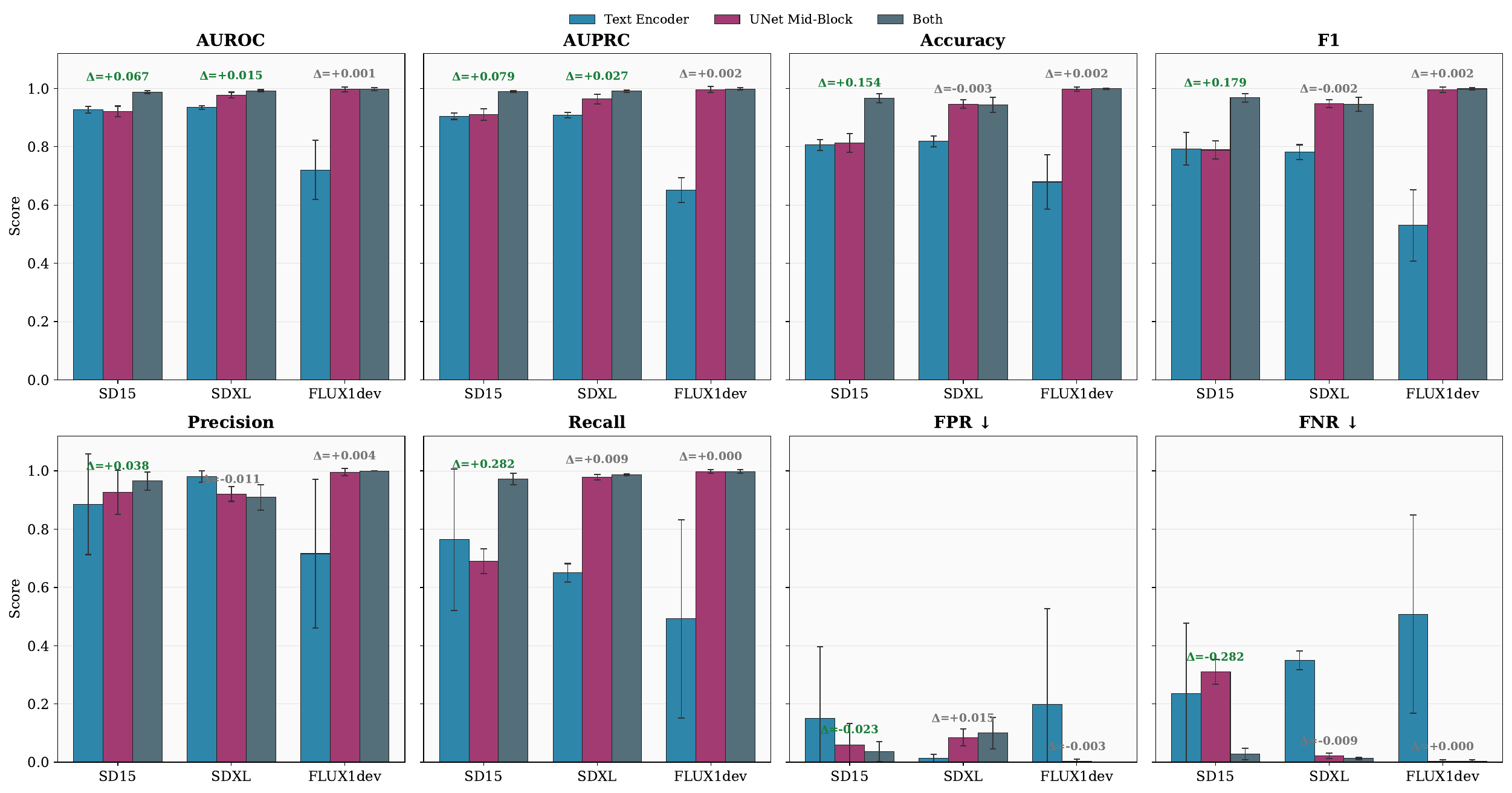}
    \caption{Module ablation for Gaussian probing across architectures. We compare probes extracted from the text encoder alone, the UNet mid-block alone, and both modules combined, across eight metrics. $\Delta$ values show the improvement from ensembling both modules over the best single module. Combining text-encoder and UNet probes is essential for SD 1.5 (AUROC $\Delta$ = +0.067, recall $\Delta$ = +0.282) and yields smaller but consistent gains on SDXL and FLUX.1-dev. The large SD 1.5 recall gain reflects finetunes that modify only the text encoder, UNet probes alone miss these entirely.}
    \label{fig:module_ablation}
\end{figure}

\clearpage
\section{Detecting CSAM Specialization in the Wild Study}

We present additional details from our in the wild study. First, we discuss how we curated our dataset of LoRAs and performed filtering to capture SFW LoRAs from NSFW LoRAs. Second we present additional results in terms of the standard cross validation results and ablations on both the impact of layer choice for SDXL 1.0 and FLUX.1-dev, and ablations on the ensembling strategy for these different layer choices. 

\subsection{Dataset Curation}

We focused our in the wild analysis on CivitAI as our platform of choice given the ease with which we can download LoRAs programatically through the API. There exist two endpoints the civitai.green and civitai.com endpoints. We use these two endpoints to help us curate the SFW LoRAs vs. the NSFW LoRAs. The civitai.green endpoint is a SFW version of civitai.com. Thus, we randomly download 1000 LoRAs from this endpoint and then we use an NSFW classifier on the gallery images to determine if it is still NSFW specialized. We also look for keywords in the model name or description that might indicate it was specialized for NSFW content. For curating the NSFW LoRAs we develop a set of keywords related to adult sexual content and use these to find LoRAs which match these search terms through the civitai.com endpoint. We then run an NSFW classifier over the gallery images to ensure at least one image is NSFW since this acts as a good proxy for the intent of the LoRA training.

\subsection{Additional Metrics}

In this section, we present the per-class values across all of our metrics for CSAM detection across all of our architectures in \ref{fig:per_class_csam_metrics} averaged over 5-fold CV.

\begin{figure}[h]
    \centering
    \includegraphics[width=0.8\linewidth]{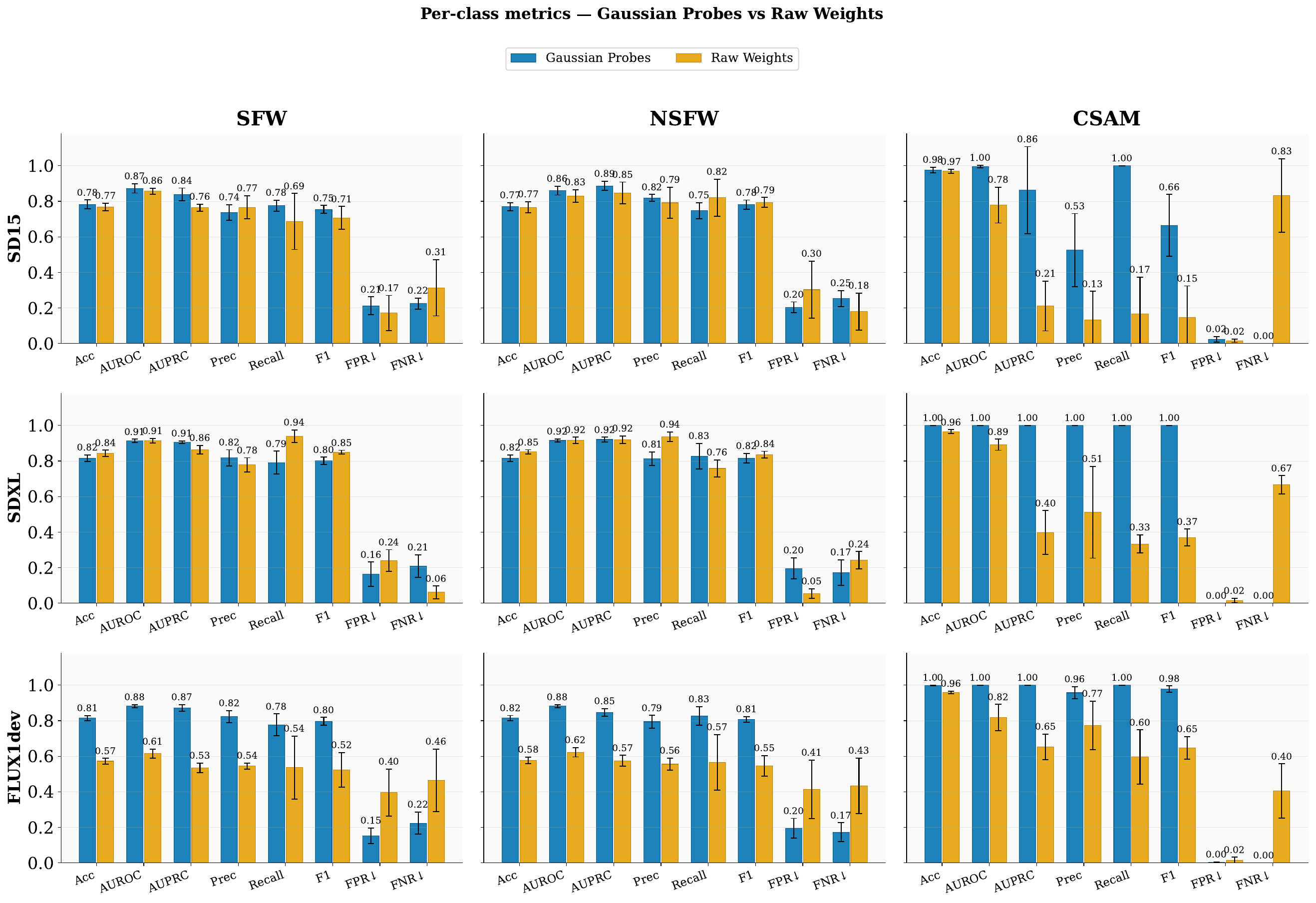}
    \caption{Per-class metrics across architectures (rows: SD 1.5, SDXL, FLUX.1-dev; columns: SFW, NSFW, CSAM). For each (architecture, class) pair we report all eight metrics for Gaussian probes and raw weights. Gaussian probing matches or exceeds raw weights on SFW and NSFW across all metrics, and decisively outperforms on CSAM. Raw weights collapse on the CSAM column due to the small CSAM sample size and high dimensionality.}
    \label{fig:per_class_csam_metrics}
\end{figure}

\clearpage
\subsection{Layer Choice Ablation}

In this section we present our ablations over the individual classifiers trained on the probes from the different layers we picked for SD 1.5, SDXL 1.0 and FLUX.1-dev: Figures~\ref{fig:layer_ablation_sd15}, \ref{fig:layer_ablation_sdxl}, and \ref{fig:layer_ablation_flux}.

\label{app:layer_choice_itw}

\begin{figure}[h]
    \centering
    \includegraphics[width=0.8\linewidth]{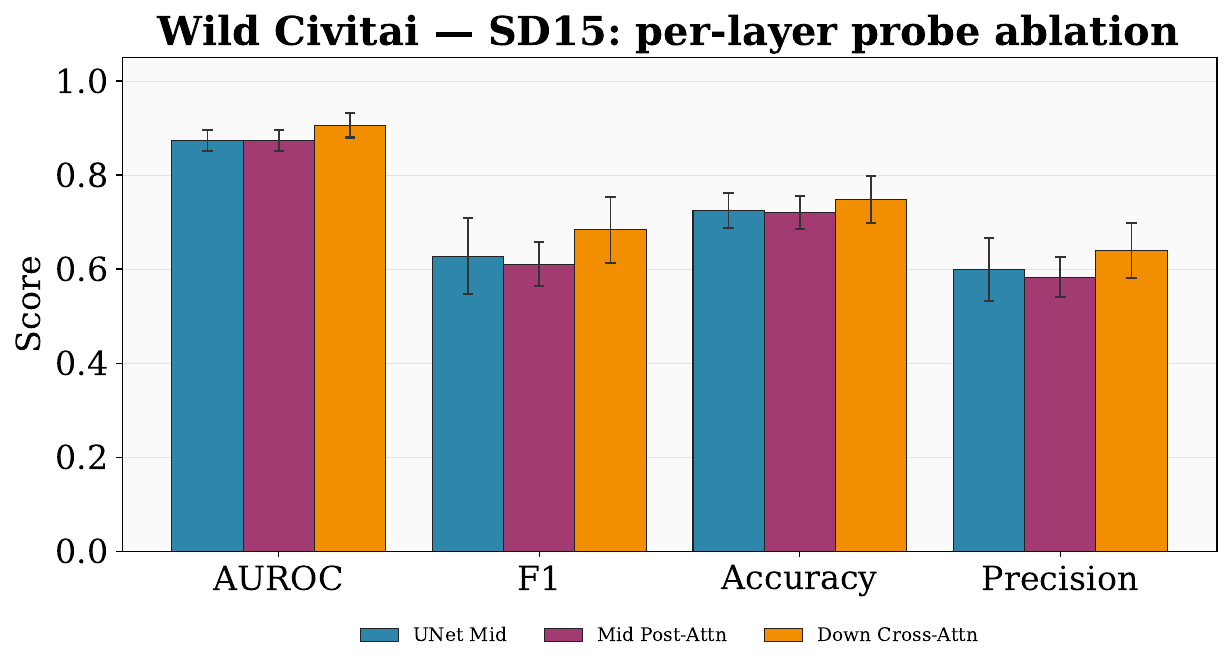}
    \caption{Per-layer probe performance on wild CivitAI SD 1.5 LoRAs. We compare Gaussian probes extracted from three candidate layers: UNet mid-block, mid post-attention, and down cross-attention. No single layer dominates across all metrics, motivating the ensemble approach.}
    \label{fig:layer_ablation_sd15}
\end{figure}

\begin{figure}[h]
    \centering
    \includegraphics[width=0.8\linewidth]{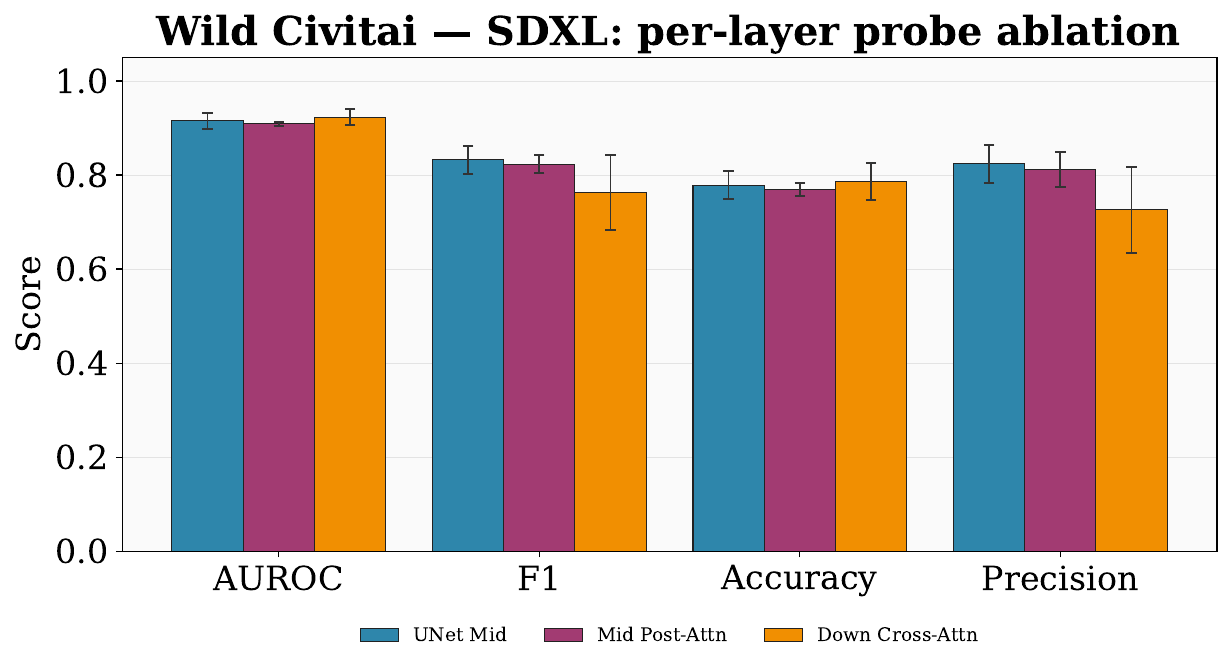}
    \caption{Per-layer probe performance on wild CivitAI SDXL 1.0 LoRAs. We compare Gaussian probes extracted from three candidate layers: UNet mid-block, mid post-attention, and down cross-attention. No single layer dominates across all metrics, motivating the ensemble approach.}
    \label{fig:layer_ablation_sdxl}
\end{figure}

\begin{figure}[h]
    \centering
    \includegraphics[width=0.8\linewidth]{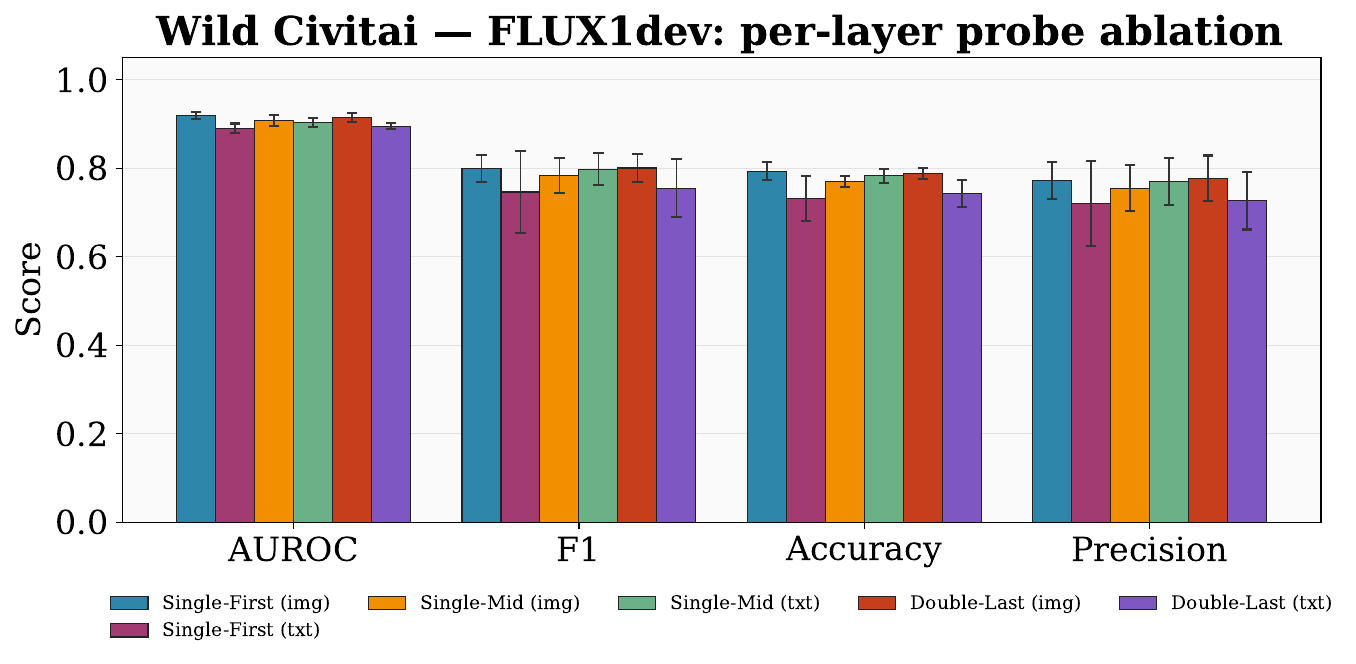}
    \caption{Per-layer probe performance on wild CivitAI FLUX.1-dev LoRAs. We compare Gaussian probes extracted from six candidate layers spanning the first, middle, and last thirds of the transformer, for both image and text streams. Performance is similar across layer choices, with no single layer clearly outperforming.}
    \label{fig:layer_ablation_flux}
\end{figure}

\clearpage
\subsection{Error Analysis}

In this section we provide the raw counts for the error analysis we present in Section~\ref{Sec:Wild} to complement the error rates and give a sense of the scale of the errors.

\begin{figure}[h]
    \centering
    \includegraphics[width=\linewidth]{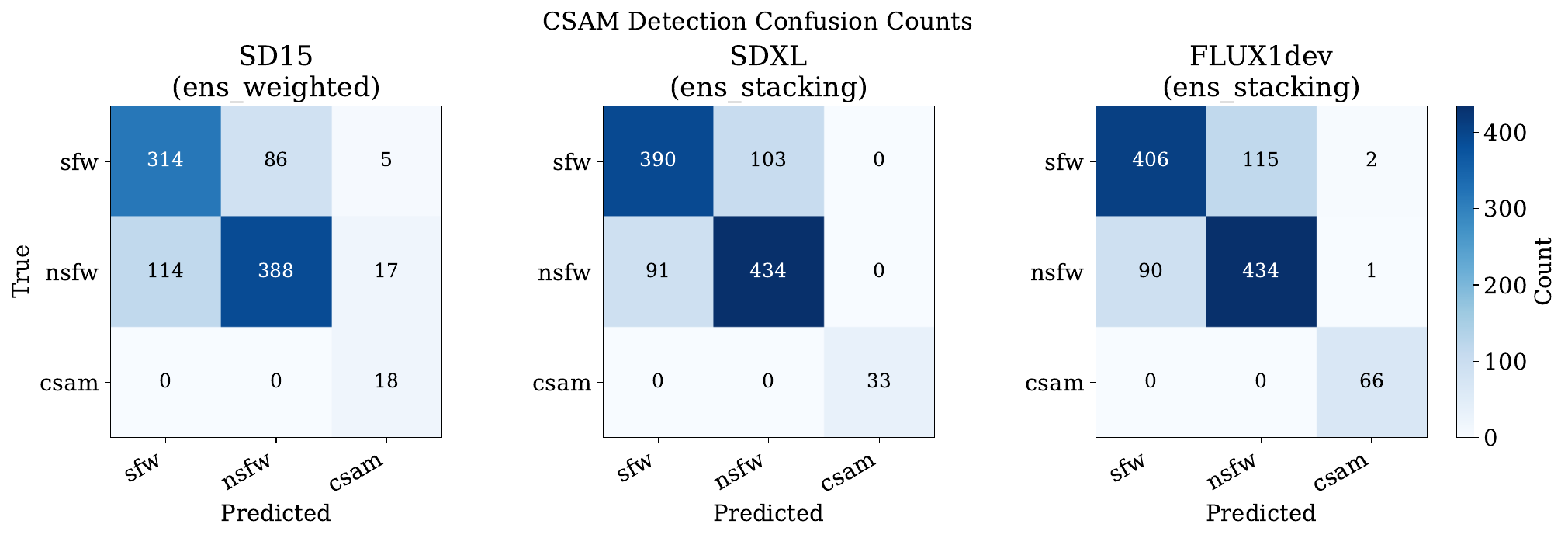}
    \caption{CSAM detection error counts by true class, across architectures. Gaussian probing achieves 100\% recall on CSAM across SD 1.5, SDXL, and FLUX.1-dev (bottom row).}
    \label{fig:placeholder}
\end{figure}

\subsection{Ensembling Ablation}
\label{app:ensembling_choice_itw}

In this section we compare the best individual layer classifier to the ensembled classifier for SD 1.5, SDXL 1.0 and FLUX.1-dev (Figures~\ref{fig:layer_ablation_best_sd15}, \ref{fig:layer_ablation_best_sdxl}, and \ref{fig:layer_ablation_best_flux1dev}). We also explore the best ensembling strategy for each architecture ((Figures~\ref{fig:ensemble_ablation_best_sd15}, \ref{fig:ensemble_ablation_best_sdxl}, and \ref{fig:ensemble_ablation_best_flux1dev}).

\begin{figure}[h]
    \centering
    \includegraphics[width=\linewidth]{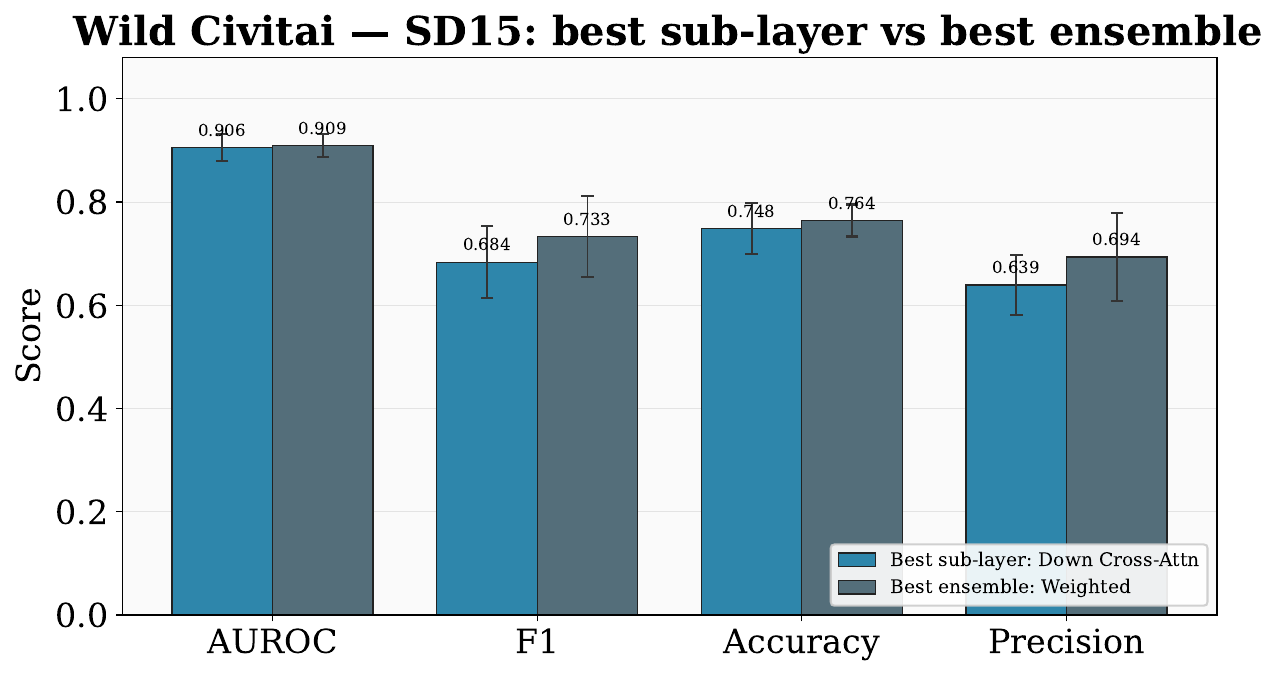}
    \caption{Best single-layer probe vs. best ensemble on wild CivitAI SD15. Ensembling probes across layers (soft vote) improves over the best single layer (down cross-attn) on all metrics.}
    \label{fig:layer_ablation_best_sd15}
\end{figure}

\begin{figure}[h]
    \centering
    \includegraphics[width=\linewidth]{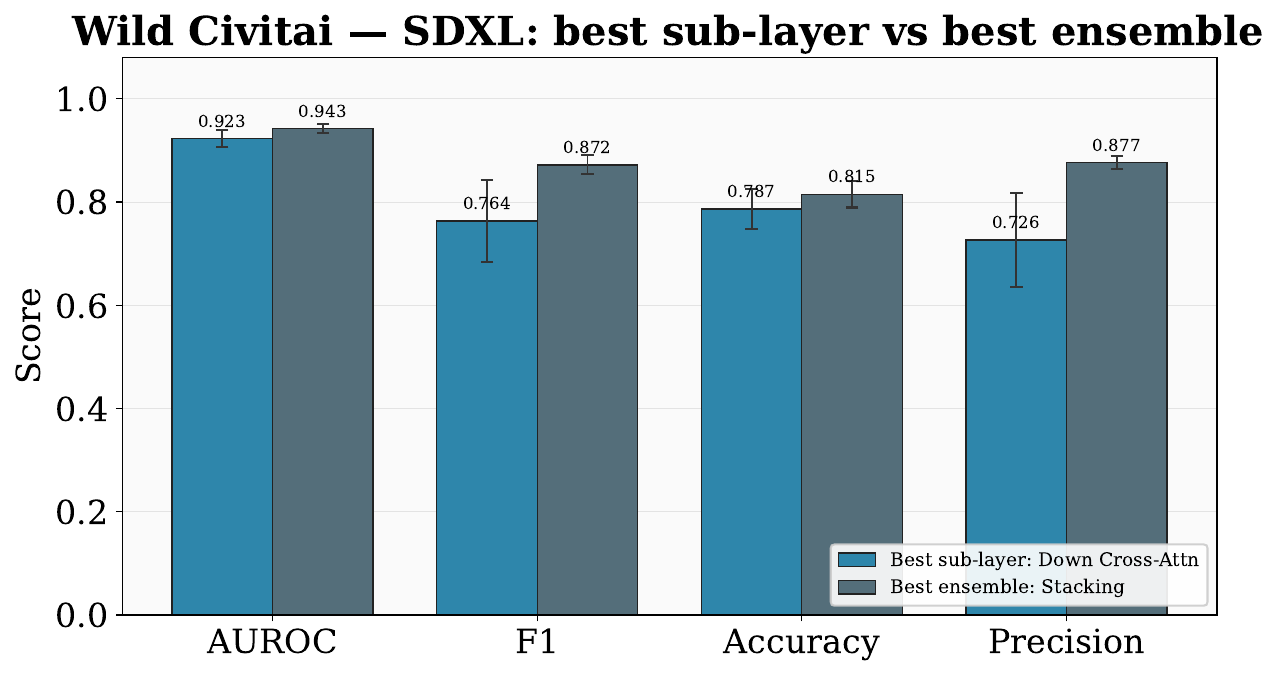}
    \caption{Best single-layer probe vs. best ensemble on wild CivitAI SDXL 1.0. Ensembling probes across layers (stacking) improves over the best single layer (down cross-attn) on all metrics.}
    \label{fig:layer_ablation_best_sdxl}
\end{figure}

\begin{figure}[h]
    \centering
    \includegraphics[width=\linewidth]{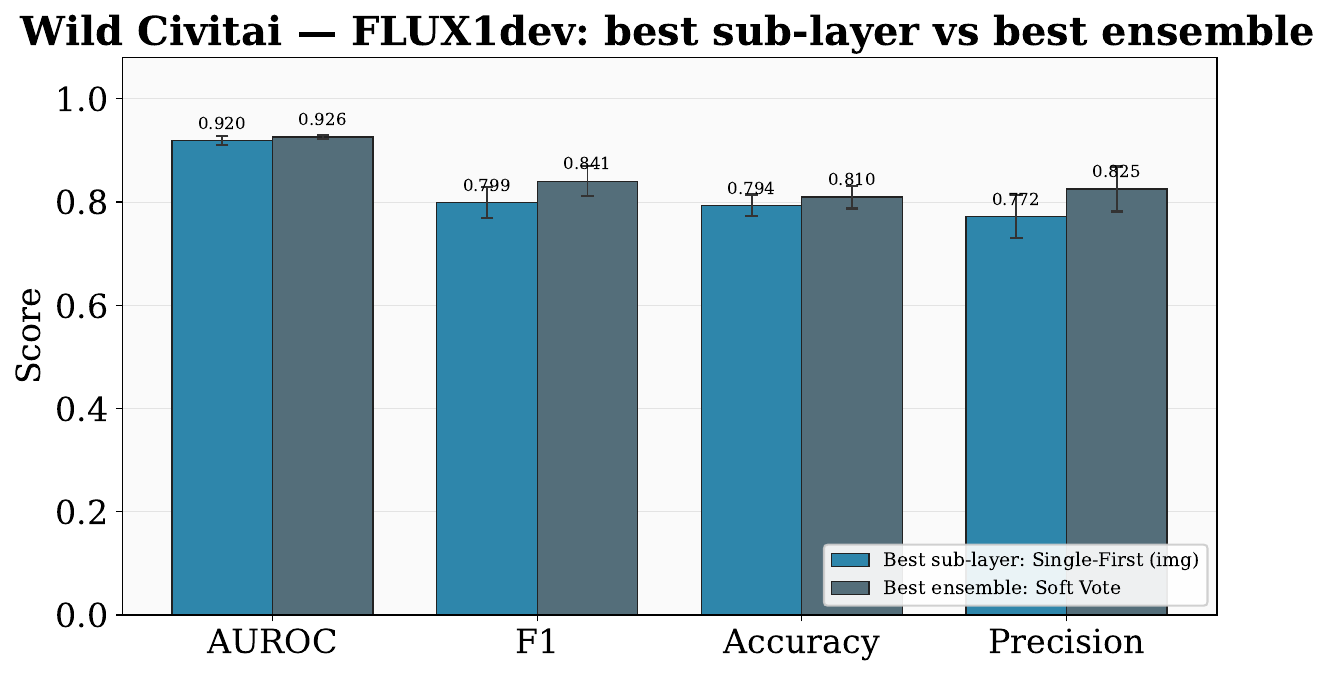}
    \caption{Best single-layer probe vs. best ensemble on wild CivitAI FLUX.1-dev. Ensembling probes across layers (soft vote) improves over the best single layer (single first image) on all metrics.}
    \label{fig:layer_ablation_best_flux1dev}
\end{figure}

\begin{figure}[h]
    \centering
    \includegraphics[width=\linewidth]{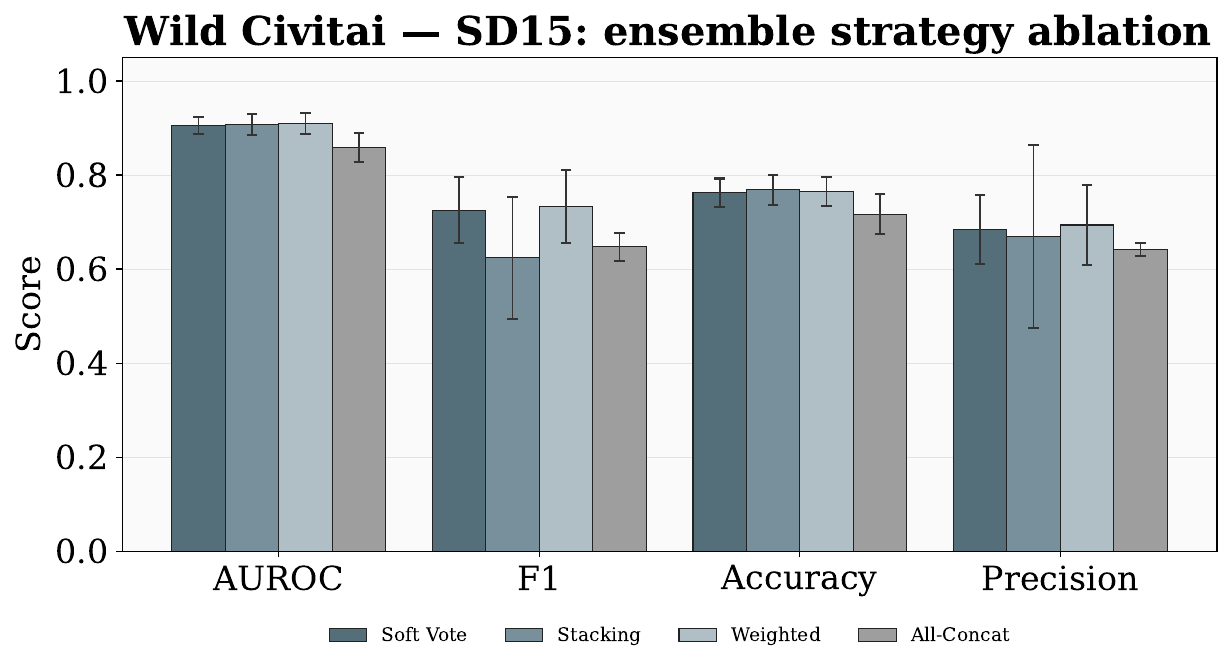}
    \caption{Comparing performance of different ensembling strategies on wild CivitAI SD 1.5. Weighted soft vote seems to provide the best performance across the metrics.}
    \label{fig:ensemble_ablation_best_sd15}
\end{figure}

\begin{figure}[h]
    \centering
    \includegraphics[width=\linewidth]{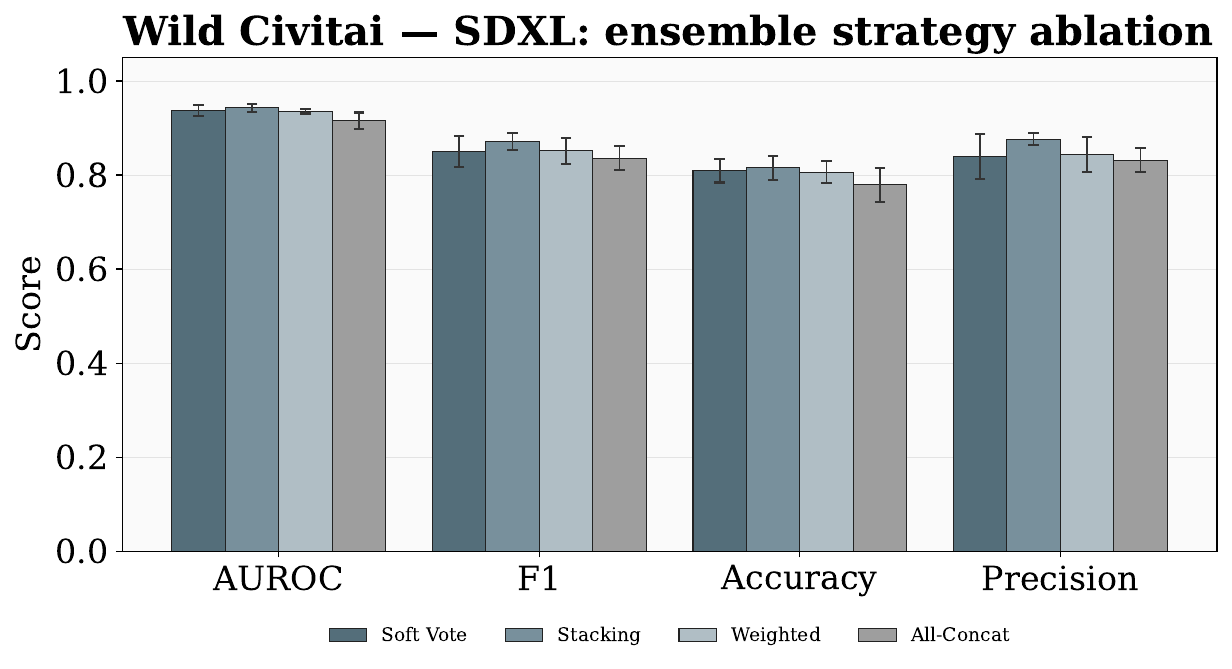}
    \caption{Comparing performance of different ensembling strategies on wild CivitAI SDXL 1.0. Stacking seems to provide the best performance across the metrics.}
    \label{fig:ensemble_ablation_best_sdxl}
\end{figure}

\begin{figure}[h]
    \centering
    \includegraphics[width=\linewidth]{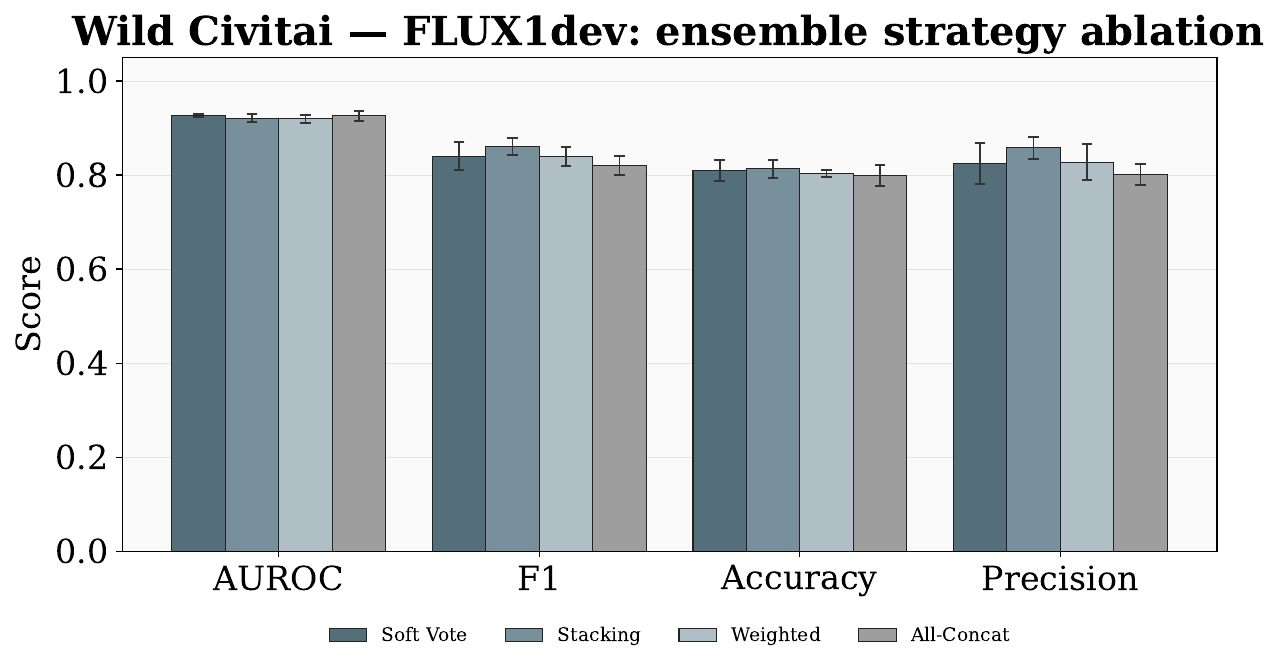}
    \caption{Comparing performance of different ensembling strategies on wild CivitAI FLUX.1-dev. Standard soft vote seems to provide the best performance across the metrics.}
    \label{fig:ensemble_ablation_best_flux1dev}
\end{figure}

\end{document}

%% file: intro_option2.tex
The proliferation of open weight generative models, such as Stable Diffusion~\citep{rombach2022high}, FLUX~\citep{flux2024}, and Wan~\citep{wan2025wan}, has made high-quality image and video generation widely accessible~\citep{yang2023diffusion, fuest2026diffusion}. In tandem, low-rank adaptation (LoRA), a commonly used finetuning algorithm, enables cheap and efficient specialization of generative models at a fraction of the cost of traditional finetuning~\cite{hu2022lora}. Accessibility has also been improved by user-friendly graphical interfaces such as InvokeAI~\citep{invokeai}, which allow amateurs and hobbyists to finetune models for their own creative purposes.
As a result, powerful image and video models can be easily specialized, and those specializations can be shared through multiple different services, including public model-sharing platforms.

This shift has created new governance challenges for open model-sharing platforms. 
%In closed deployment settings, governance is typically organized around a relatively stable model whose behavior is mediated through a provider-controlled interface. Open model ecosystems, however, present a different challenge.
Platforms such as CivitAI and Hugging Face host and distribute base models, fine-tuned variants, and, crucially, lightweight LoRA adaptors that users can combine, circulate, and redeploy widely before any output is ever inspected. This includes models optimized to produce child sexual abuse material (CSAM), with offenders producing bespoke models through fine-tuning that target particular children, victims and survivors of child sexual abuse \cite{thiel2023generative}.  Governance, therefore, becomes a problem of screening reusable model artifacts before they are broadly distributed.

At present, model assessment for first-party model providers is still largely organized around {\bf generative evaluation}: prompting an adapted model, inspecting its outputs, and using those outputs to infer whether the model has acquired a harmful capability~\citep{thornsafetybydesign}. This approach does not scale to model hosting platforms. It depends on prompt coverage, requires iterative red-teaming and review, and becomes increasingly costly as the number of uploaded variants grows. The scale of adaptors to be screened is substantial: CivitAI alone reports hundreds of thousands of new LoRAs trained in a single month, alongside millions of generations\footnote{\url{https://www.runpod.io/case-studies/civitai-runpod-case-study}}. 
%Prior work has identified nearly 35,000 publicly downloadable harmful deepfake model variants across CivitAI and Hugging Face, the majority hosted on CivitAI~\cite{hawkins2025deepfakes}, while estimates suggest that 16.97\% of CivitAI-hosted models are trained to generate not-safe-for-work (NSFW) content~\citep{wei2024exploring}. 
At this scale, output-based auditing cannot serve as the sole mechanism for pre-distribution governance.

For several categories of harmful content, the limitations are even more acute. Evaluating outputs related to bioweapons, cyberattacks, hate speech, or non-consensual intimate imagery can impose substantial psychological burdens, and the expertise required for reliable evaluation may be scarce~\citep{roberts2019behind, steiger2021psychological, gillespie2018custodians}. For CSAM, our central motivating application, generative evaluation breaks down entirely: attempting to generate CSAM, regardless of intent or success, is unlawful behavior in several regulatory regimes and jurisdictions, including the United States~\citep{shevlane2023modelevaluationextremerisks}. 

This leaves platforms and auditors with limited ability to determine whether a user-uploaded adaptor has specialized a model toward CSAM generation before that adaptor is shared.  LoRAs represent a unique risk, since their small and portable format makes them easy for offenders to exchange \citep{thiel2023generative}. That limitation matters given AI-generated CSAM is a large and accelerating crisis, with cross-sector impact across hotlines, content moderators, law enforcement \citep{thiel2023generative, iwf2026aigcsamupdate}, creating significant human harm\footnote{\url{https://www.cbsnews.com/news/sextortion-generative-ai-scam-elijah-heacock-take-it-down-act/}}. In its most recent report analyzing 2024 reports, NCMEC (the National Center for Missing and Exploited Children, which acts as a global clearinghouse for reports related to child sexual abuse and exploitation) received 67,000 reports of AI-generated CSAM, up from 4,700 in 2023~\citep{ncmec2025cybertipline}.

Taken together, this harm landscape motivates our core questions:
\begin{center}
  \emph{
Can harmful specialization be detected from weights alone,\\ without ever generating an output?}
\end{center}

\begin{figure}[t]
    \centering
    \begin{subfigure}[t]{0.48\textwidth}
        \centering
        \includegraphics[width=\linewidth]{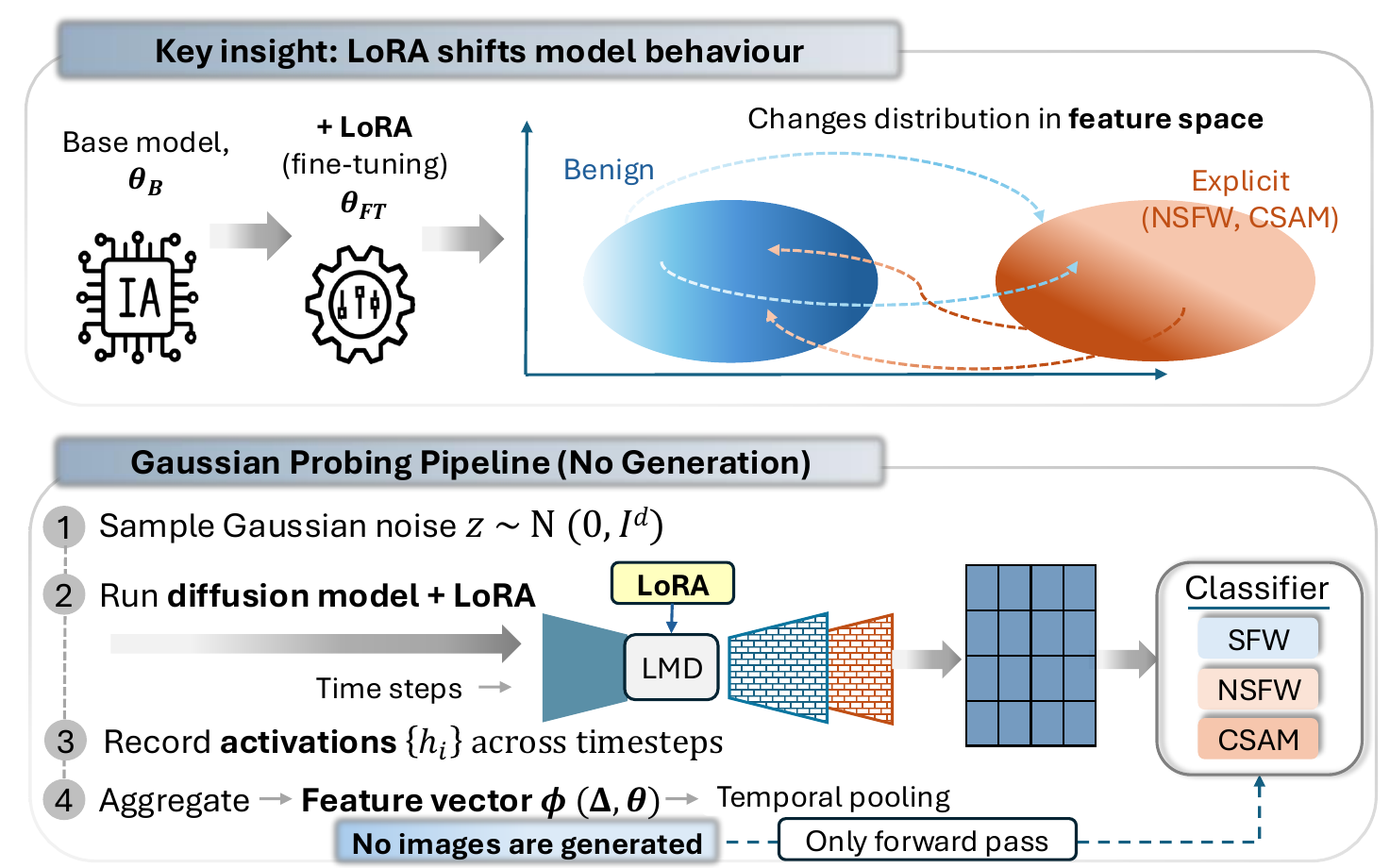}
        \caption{}
        \label{fig:panel_a}
    \end{subfigure}
    \hfill
    \begin{subfigure}[t]{0.48\textwidth}
        \centering
        \includegraphics[width=\linewidth]{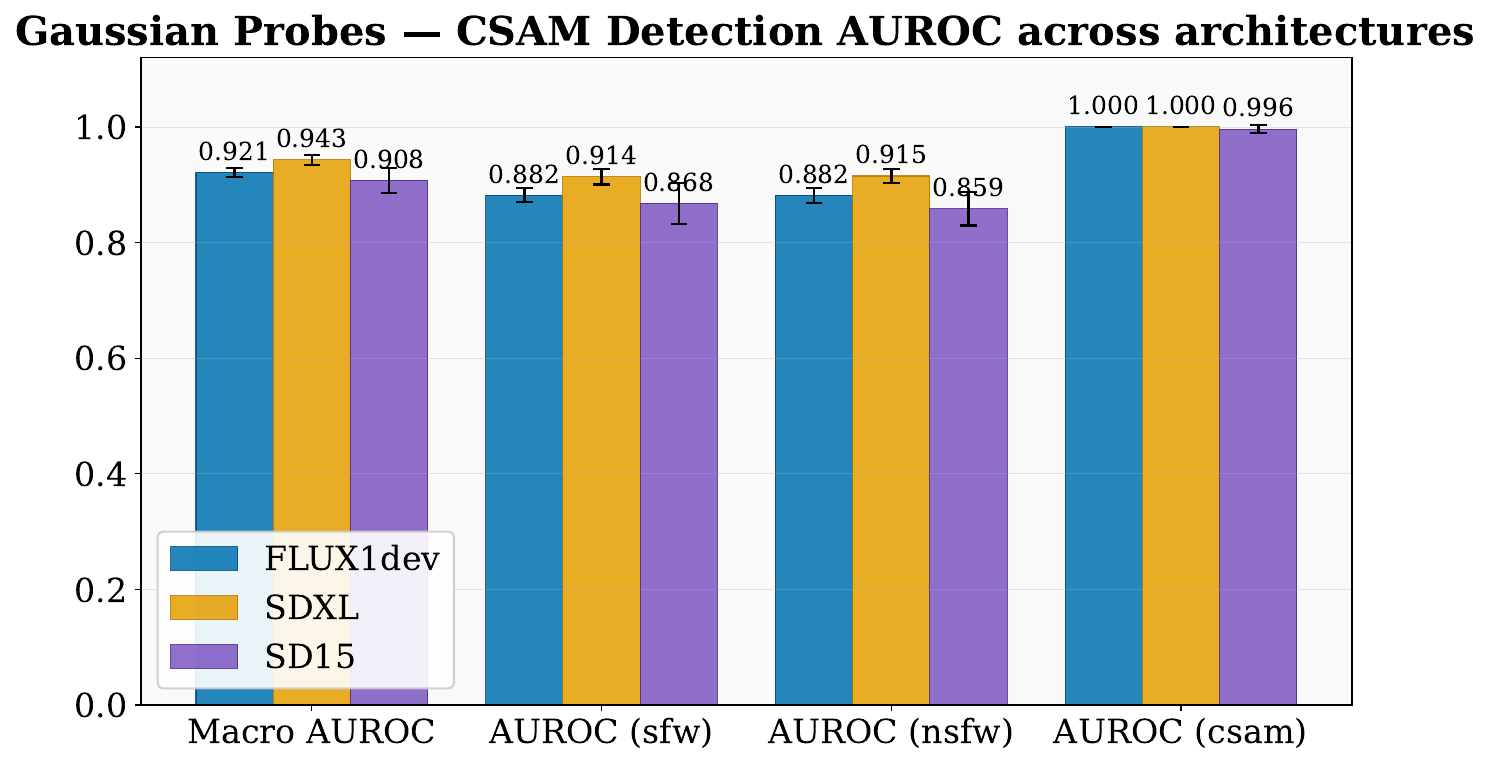}
        \caption{}
        \label{fig:panel_b}
    \end{subfigure}
    \caption{(a) Key insight: LoRA fine-tuning shifts the model’s feature distribution, separating benign and explicit outputs in latent space. Proposed Gaussian probing pipeline, which samples Gaussian noise, runs the diffusion model with LoRA, records intermediate activations across timesteps, and aggregates them into feature vectors without generating images. (b) Example results for all three architectures (SD 1.5, SDXL 1.0, FLUX.1-dev) for NSFW and CSAM detection in the wild. }
    \label{fig:pipeline}
\end{figure}

We call this the {\bf Evaluation without Generation} problem.  In this work, we study a concrete and narrower instance that arises in open ecosystems: screening LoRA adaptors for evidence of \emph{direct specialization} toward two sensitive content categories, adult sexual content (NSFW) and CSAM, without generating outputs. We focus on LoRAs because they are the primary unit through which specialization is created, packaged, and distributed. We answer this in the affirmative by shifting evaluation from output space to the model’s state. Rather than measuring what a model generates, we measure how fine-tuning changes its parameters or internal computation. This reframes capability evaluation as an inference problem over model state rather than outputs. %in these ecosystems: small, portable artifacts that can proliferate widely before any downstream use is observed.

We propose {\bf Gaussian probing}, a method that characterizes how a LoRA functionally perturbs a base diffusion model by measuring its internal responses to random Gaussian inputs. These responses provide a scalable, prompt-free signature of adaptor specialization without requiring image generation. We evaluate the method in two settings. First, in controlled experiments on in-house trained safe-for-work (SFW) and not-safe-for-work (NSFW) LoRAs spanning variation in datasets, styles, architectures, and training conditions. Second in a naturalistic setting using LoRAs collected from public platforms such as CivitAI and CSAM LoRAs accessed through authorized entities and in accordance with applicable laws, where heterogeneity, label noise, and shortcut opportunities better reflect deployment conditions. Across both regimes, Gaussian probing reliably detects harmful NSFW and CSAM specialization while remaining more robust than raw-weight baselines to superficial training artifacts. To our knowledge, {\em this is the first scalable, non-generative method for pre-distribution screening of CSAM-related specialization in user-uploaded generative models.}

% New citations 
% -https://www.runpod.io/case-studies/civitai-runpod-case-study
% - https://arxiv.org/pdf/2505.03859

%% file: capability.tex
\subsection{Defining the Governance Target and Scope}
\label{Sec:Formulation}

Open platforms face a practical question when users upload LoRA adaptors: does this warrant intervention before it is widely shared? Put differently, can an uploaded adaptor contribute to harmful downstream use? Answering that question requires some care, because capability is not an intrinsic property of a LoRA in isolation. Whether an adaptor is treated as harmful depends on how it is expected to be used and on what kind of behavior a platform is prepared to tolerate. For example, a platform might choose to permit adaptors with moderate adult-content propensity, while adopting a zero-tolerance rule for any minors-related propensity.

In full generality, a capability definition is difficult to identify. The relevant use distribution may be uncertain, institution-specific, or strategically shaped by users, and the policy itself is partly normative. In this work, we therefore study a narrower but still useful case of {\em direct harmful specialization}: whether a LoRA was directly fine-tuned on data drawn from a target harmful content category, specifically, CSAM. This does not capture every route by which harmful capability may arise, such as LoRA composition, model merging, or more complex downstream chaining. But it does capture an important and operationally meaningful slice of the broader governance problem, namely the one most directly tied to the uploaded artifact itself and most amenable to pre-distribution screening.

%This narrower focus still leaves a difficult auditing problem. First, as mentioned previously, observability is limited. 
%In the high-risk settings that motivate our study, e.g. CSAM identification, output-based evaluation is legally prohibited so the auditor cannot rely on generating and inspecting model outputs. 
%Second, screening must happen at repository scale. Platforms may need to assess large and constantly changing collections of uploaded adaptors, which rules out methods whose storage or compute requirements scale impractically with the full parameter dimension. Third, the signal used for screening must be robust. A useful auditor cannot rely on incidental features of how particular communities fine-tune models, nor on dataset-specific style cues that happen to correlate with harmful labels in one sample. Those signals may disappear under distribution shift and may be easy for adversaries to manipulate without changing the adaptor's underlying specialization.
Even under this narrower target, the auditing problem remains difficult. The auditor must detect harmful specialization from the uploaded adaptor and base model alone, under constraints of observability, scale, and robustness.
We formalize these constraints as the following methodological desiderata:

\begin{enumerate}[label=\textcolor{navy}{\textbf{(D\arabic*)}}]
\item{\em Non-generative observability.}
The representation must be computable from the uploaded adaptor and the base model alone, without requiring prompt design, output generation, or human review. 

\item{\em Scalability.}
A practical auditor must operate under realistic storage and compute constraints. First, the resulting representation must be much smaller than the full parameter dimension, so that a large repository of adaptors can be stored and processed in memory. Second, the cost of constructing the representation for a single adaptor must remain feasible at audit scale. %We therefore seek representations that can be computed directly from each adaptor with moderate per-adaptor cost, while keeping downstream classifier training efficient.

\item {\em Robustness.}
The representation should reflect direct harmful specialization rather than incidental features of how a LoRA was produced. We identify three types of signal that \textit{can} be predictive and describe which are the ideal ones to utilize: 

\begin{itemize}[leftmargin=*]
    \item \textbf{development artifacts} such as rank, learning-rate schedules, or update magnitude
    \item \textbf{dataset identity} such as stylistic or distributional peculiarities of a particular training image datasets
    \item \textbf{content signal} or information tied to the underlying harmful category that persists across datasets, fine-tuning runs, and implementation choices
\end{itemize}

This distinction matters for generalization and robustness to adversarial changes.%A detector that relies on superficial artifacts may perform well in-distribution while failing under even simple adaptive attacks.
%The representation should reflect direct harmful specialization rather than incidental features of how a LoRA was produced. In particular, it should avoid relying on \emph{development artifacts}, such as rank, learning-rate schedules, or update magnitude, and it should not collapse onto \emph{dataset identity}, such as stylistic or distributional peculiarities of a particular training corpus. The ideal signal is instead \emph{content signal}: information tied to the underlying harmful category that persists across datasets, fine-tuning runs, and implementation choices. This matters not only for generalization, but also for robustness to adversarial attacks. A detector that relies on superficial artifacts may perform well in-distribution while failing under even simple adaptive manipulations.
\end{enumerate}
%Taken together, these desiderata narrow the space of acceptable screening methods. %The problem is not merely to separate harmful from benign LoRAs, but to do so from restricted evidence, at repository scale, and in a way that remains meaningful under heterogeneous and potentially strategic fine-tuning practices.
\subsection{Formalizing the Screening Task}
Fix a harmful content category \(c\) and a pretrained base diffusion model
\(f_{\theta_{\mathrm{base}}}\). For a LoRA adaptor \(\Delta\), let
\(
f_{\theta(\Delta)} := f_{\theta_{\mathrm{base}}+\Delta}
\)
denote the adapted model. For each adaptor \(\Delta\), let
\(
y(\Delta) \in \{0,1\}
\)
indicate whether \(\Delta\) was directly fine-tuned on data drawn from category \(c\).

The auditor observes \((\theta_{\mathrm{base}}, \Delta)\), but not the underlying training data and not any generated outputs. The auditing task of interest is given labeled adaptors \(\{(\Delta_i,y(\Delta_i))\}_{i=1}^n\), predict \(y(\Delta)\) from weights alone. We consider screening rules of the form
\(
g \circ \Phi,
\)
 where \(\Phi(\Delta;\theta_{\mathrm{base}}) \in \mathcal H,
\) 
is a fixed representation map and \(g:\mathcal H \to \{0,1\},
\) is a learned classifier. The auditor solves
\[
\hat g_\Phi
\in
\arg\min_{g}
\frac{1}{n}\sum_{i=1}^n
\ell\!\left(y(\Delta_i), g(\Phi(\Delta_i;\theta_{\mathrm{base}}))\right)
+\lambda \|g\|^2 .
\]
The central technical question is therefore how to construct a representation
\(\Phi(\Delta;\theta_{\mathrm{base}})\) that recovers direct harmful specialization from restricted, non-generative evidence while satisfying the scalability and robustness desiderata.

% We formulate the auditor's task as follows. Let $\theta_{\texttt{base}} \in \mathbb{R}^d$ denote the pretrained base model and $\{\Delta_i, y_i\}_{i=1}^n$ denote a collection of $n$ LoRA adaptors  where $y_i \in \{0, 1\}$ encodes whether each LoRA has been specialized for unsafe content ($y_i = 1$) or not ($y_i = 0$). 
% The auditor's objective is to learn a classifier $g \circ \Phi$ operating on a structured representation of each adaptor, where $\Phi:\mathbb{R}^d \times \mathbb{R}^d\rightarrow \mathcal{H}$ is a feature map that extracts representations from each adaptor relative to the base model, and $g: \mathcal{H} \rightarrow \{0,1\}$ is the classifier trained on these representation. Intuitively, the auditor must learn to distinguish harmful from benign adaptations based on how each LORA modifies the base model. Formally, the general task can be formulated as,
% \begin{equation}
%     \min_{g, \Phi} \frac{1}{n}\sum_{i=1}^n \ell (y_i, g(\Phi (\Delta_i; \theta_{\texttt{base}}))) + \lambda \|g\|^2. 
% \end{equation} 

\section{Gaussian Probing}

We instantiate the representation map \(\Phi(\Delta;\theta_{\mathrm{base}})\) by probing the adapted model on a reference ensemble of Gaussian latent states. This representation summarizes how the adaptor changes the computation implemented by the base model.
%The central challenge is to construct a non-generative representation of a LoRA adaptor that is scalable, robust, and informative about direct harmful specialization. A raw-weight representation treats \(\Delta\) as a point in parameter space. Our goal, however, is not to describe the adaptor in raw coordinates, but to characterize how it changes the computation implemented by the base model. We therefore build a representation by probing the adapted model on a reference ensemble of Gaussian latent states.

Let \(H\), \(W\), and \(C\) denote the height, width, and number of channels of the diffusion model input, and define \(\bar d = HWC\). Let 
\(
\theta(\Delta) = \theta_{\texttt{base}} + \Delta
\)
denote the adapted parameters. We draw \(m\) i.i.d. Gaussian probes
\[
\nu_j \sim \mathcal{N}(0,I_{\bar d}), \qquad j=1,\dots,m.
\]

Each probe \(\nu\) is propagated through the diffusion process using the adapted model \(f_{\theta(\Delta)}\) for \(T\) denoising steps. During this process, we extract intermediate hidden representations from a fixed layer, or fixed set of layers, of the denoising network.\footnote{In practice, we extract activations from one or more designated internal layers of the denoising network, such as a mid-block feature map of the U-Net in Stable Diffusion 1.5.} Let
\(
H^{(T)}(\Delta;\nu), H^{(T-1)}(\Delta;\nu), \dots, H^{(1)}(\Delta;\nu)
\)
denote the resulting hidden representations across diffusion timesteps.
For a single probe \(\nu\), we aggregate these hidden states into a probe-specific feature
\[
\bar H(\Delta;\nu)
=
\psi\left(H^{(1)}(\Delta;\nu),\dots,H^{(T)}(\Delta;\nu)\right),
\]
where \(\psi\) is a fixed pooling map across timesteps and, when relevant, across layers, spatial positions, and channels. In the simplest case, \(\psi\) is the timestep average
\[
\bar H(\Delta;\nu)
=
\frac{1}{T}\sum_{t=1}^T H^{(t)}(\Delta;\nu).
\]

The population object underlying Gaussian probing is the probe functional
\begin{equation}
\label{eq:pop_probe}
\Psi(\Delta):=
\mathbb{E}_{\nu \sim \mathcal{N}(0,I_{\bar d})}\big[\bar H(\Delta;\nu)\big].
\end{equation}
This function summarizes how the adapted model responds, on average, to a reference ensemble of Gaussian latent states. Our empirical representation is the corresponding Monte Carlo estimator
\begin{equation}
\label{eq:emp_probe}
\Phi(\Delta;\theta_{\texttt{base}})
:=
\widehat{\Psi}_m(\Delta)
=
\frac{1}{m}\sum_{j=1}^m \bar H(\Delta;\nu_j).
\end{equation}
As we will illustrate, %under mild moment assumptions, \(\widehat{\Psi}_m(\Delta)\) converges almost surely to \(\Psi(\Delta)\) as \(m \to \infty\). In this sense, 
this can be interpreted as the expected pushforward of the LoRA perturbation through the model’s denoising dynamics under the native latent distribution. 
%Gaussian probing estimates a stable, finite-dimensional summary of the adapted model's internal response profile on the model's native Gaussian state space.

\subsection{Motivation for Gaussian Probing}
%The screening task is fundamentally about behavior, not weights. 
What the auditor ultimately cares about is whether a LoRA changes the model in a way that supports harmful downstream use. In principle, the most direct object of study would therefore be the distribution of model outputs under some reference input ensemble. But in the settings that motivate this work, outputs cannot be generated. Therefore our approach replaces outputs with internal activations: rather than asking what the adapted model renders, we ask how the adaptor changes the model’s internal response profile on the diffusion process’s native Gaussian state space.

This substitution is meaningful only if intermediate activations carry semantically relevant information. Prior work suggests that they do. In particular, diffusion models appear to organize high-level concepts in internal latent representations that remain coherent across denoising steps, and those representations can be used to decode or steer semantic attributes of generation~\citep{kwon2022diffusion}.  We leverage the same structure for a different purpose. Rather than steering outputs, we use internal responses to detect whether a LoRA systematically shifts the model toward a harmful specialization.

This intuition can be made more precise through a local linearization.
Fix a timestep \(t\) and probe \(\nu\), and suppose the extracted hidden state is differentiable with respect to the model parameters. Then for a LoRA-induced perturbation 
\(\Delta\),
\begin{equation*}
    H^{(t)}(\Delta;\nu) - H^{(t)}(0;\nu)
    =
    D_\theta H^{(t)}(\theta_{\mathrm{base}};\nu)[\Delta]
    + o(\|\Delta\|),
\end{equation*}
where \(D_\theta H^{(t)}\) denotes the Fr\'echet derivative of the hidden state mapping at the base parameter configuration.
Equivalently, after vectorization, 
\begin{equation*}
    H^{(t)}(\Delta;\nu) - H^{(t)}(0;\nu)
    \approx
    J_t(\theta_{\mathrm{base}};\nu)\,\mathrm{vec}(\Delta),
\end{equation*}
where \(J_t\) is the Jacobian matrix representing the sensitivity of the internal representations to parameter changes.

Under this view, Gaussian probing is not an arbitrary feature extractor. It measures how the LoRA is pushed forward into activation space by the denoising dynamics of the base model. Averaging over Gaussian probes emphasizes parameter directions that actually affect the model’s computation on its native latent state space, rather than treating all directions in weight space as equally meaningful. Because LoRA updates are low-rank, the resulting activation shift is constrained to a locally low-dimensional subspace. This gives a principled reason to expect that harmful and benign specializations may be distinguishable by relatively simple decision rules, including linear classifiers, if their induced activation responses differ systematically.

We use Gaussian noise as the reference ensemble for two reasons. First, it is native to the diffusion process itself, so it provides a prompt-free way to interrogate the model’s denoising dynamics without making assumptions about future user prompts. Second, the probes are sampled independently of the fine-tuning pipeline. Because they do not inherently encode dataset metadata or rank selection, these responses provide a cleaner signal for desideratum~{\color{navy} \bf (D3)}. If a classifier built on Gaussian probe responses separates harmful from benign adaptors, that separation will likely come from how the LoRA functionally perturbs the model on the reference state space instead of superficial traces of how the LoRA was produced. In this sense, Gaussian probing is designed to privilege content-relevant functional signal over incidental training artifacts.

Since \(\Phi(\Delta;\theta_{\mathrm{base}})\) is a Monte Carlo estimator of
\(\Psi(\Delta)\), standard law-of-large-numbers arguments imply consistency under mild moment assumptions; we state this formally in Appendix~\ref{App:Theory}. The central question is therefore empirical: whether the induced shift in this representation is both sufficiently large and sufficiently well-estimated at finite sample sizes to support reliable screening in practice.
% Rather than assuming a margin exists, we treat its existence as our primary research question. While theory suggests that harmful specialization should shift the model’s internal representations in a predictable way, the practical ``width'' of that shift is an empirical matter. We show that this margin is indeed sufficient for classification in Sections~\ref{Sec:Explore} and~\ref{Sec:Wild}.

\paragraph{Gaussian probing satisfies the desiderata.}
Computing $\Phi(\Delta; \theta_{\mathrm{base}})$ requires only forward passes through the adapted model on Gaussian inputs: no outputs are decoded, no images are rendered, and no prompt is designed or reviewed, satisfying {\color{navy} \textbf{(D1)}}. The computation requires $O(mT)$ forward passes, is parallelizable across the audit set, and produces a representation of fixed dimension $|\mathcal{H}|$ regardless of the adaptor's rank or structure. This representation is orders of magnitude smaller than the full parameter dimension $d$, satisfying {\color{navy}\textbf{(D2)}}. Finally, because the representation captures the functional effect of $\Delta$ on intermediate activations rather than its footprint in weight space, in principle, two adaptors with identical functional behavior but different weight norms or training configurations will produce similar probe activations, while two adaptors with different specialization will not, satisfying {\color{navy}\textbf{(D3)}}. The remainder of the paper evaluates whether it succeeds in doing so empirically.

%% file: sections/adult_sexual_content_detection.tex
We evaluate how well the representation \(\Phi\) satisfies the desiderata introduced in Section~\ref{Sec:Formulation}. We begin with a controlled setting, using LoRAs trained on SFW and adult pornography (NSFW) content to isolate separability and signal quality. We then consider a naturalistic setting using LoRAs from CivitAI to study robustness to training artifacts.

We compare Gaussian probing to classifiers built on raw weight representations, including random projections of weight matrices. While these methods are simple and non-generative, they treat all directions in parameter space as equally meaningful and may therefore rely on incidental features of the training process rather than underlying content specialization.

We evaluate three questions:
(1) Does harmful specialization induce sufficient separation in representation space?
(2) Do representations capture content signal rather than dataset identity or artifacts?
(3) Are these properties preserved in naturalistic settings and under adversarial conditions?

\subsection{Controlled Study}
\paragraph{Experimental Setup} %First, we describe our pipeline for producing our dataset of LoRAs that we will analyze. 
To produce our SFW LoRAs, we use six different datasets: COCO2017~\citep{lin2014microsoft}, Flickr30K~\citep{plummer2015flickr30k}, Conceptual Captions 12M~\citep{changpinyo2021conceptual}, LAION-Aesthetics~\citep{schuhmann2022laion}, OpenImages~\citep{kuznetsova2020open}, Unsplash-Lite~\cite{unsplashlite}, and Wikiart~\cite{saleh2015largescaleclassificationfineartpaintings}. Together, these cover a broad range of benign content, including images of humans (faces, poses, non-explicit suggestive content), landscapes, objects, art, and fashion. We also  use three adult pornography datasets in this task: NSFW Video Still~\citep{morelli2016nsfw}, Danbooru2023~\citep{nyanko2023danbooru}, Amaye15~\citep{amaye2025objectsegmentation}. All datasets are filtered using Thorn’s Safer hashing and classification technology to ensure removal of CSAM.\footnote{\url{https://safer.io/solutions/}} For each adult dataset we create 10 different random samples, along with additional subsets (e.g., cartoon-only and varying levels of nudity) to support our investigation into the kind of signal being encoded by these representations. 

We train 1,000 LoRAs per class (SFW and NSFW), varying dataset, sample, rank, learning rate, modules, training steps, and data size. This randomization ensures that models cannot rely on spurious correlations tied to specific training configurations, forcing representations to capture underlying content signal. In order to avoid the failure mode we already identified, we intentionally choose to remove these potential confounds. For Gaussian probing, we sample 1024 probes for each model and for raw weights we use a projection dimension of 256 for every layer.

We evaluate across Stable Diffusion 1.5, SDXL 1.0, and FLUX.1-dev, spanning a range of model sizes and architectures from 860M to 12B parameters, including both U-Net and transformer-based diffusion models. This diversity allows us to study the impact of architecture and scale, as well as the role of layer selection for representation extraction. In particular, we examine which layers carry the most semantic signal, especially for SDXL and FLUX where prior guidance is limited.

We report accuracy, precision, recall, F1, FPR, and FNR under two settings: (1) standard 5-fold cross-validation, and (2) a leave-dataset-out (LDO) setting to test conceptual generalization. For this particular application, we are most interested in the AUROC, the precision and the false positive rate (FPR). Thus, these are the metrics we report throughout the plots in the main paper. All other metrics are reported in Appendix~\ref{app:structured_cv}.

It is important to note that this controlled study eliminates many of the confounds which make this task inherently more difficult in the wild. These confounds include: which base checkpoint the LoRA was finetuned from, biases in which layers of the model are finetuned, and each individual LoRA being trained on a different dataset. As a result, we expect that the performance we see in this controlled setting will be higher than what we would see in the wild. Nevertheless, it more cleanly allows us to investigate the questions we outlined without the confounds playing a role.

\begin{figure}
    \centering
    \includegraphics[width=\linewidth]{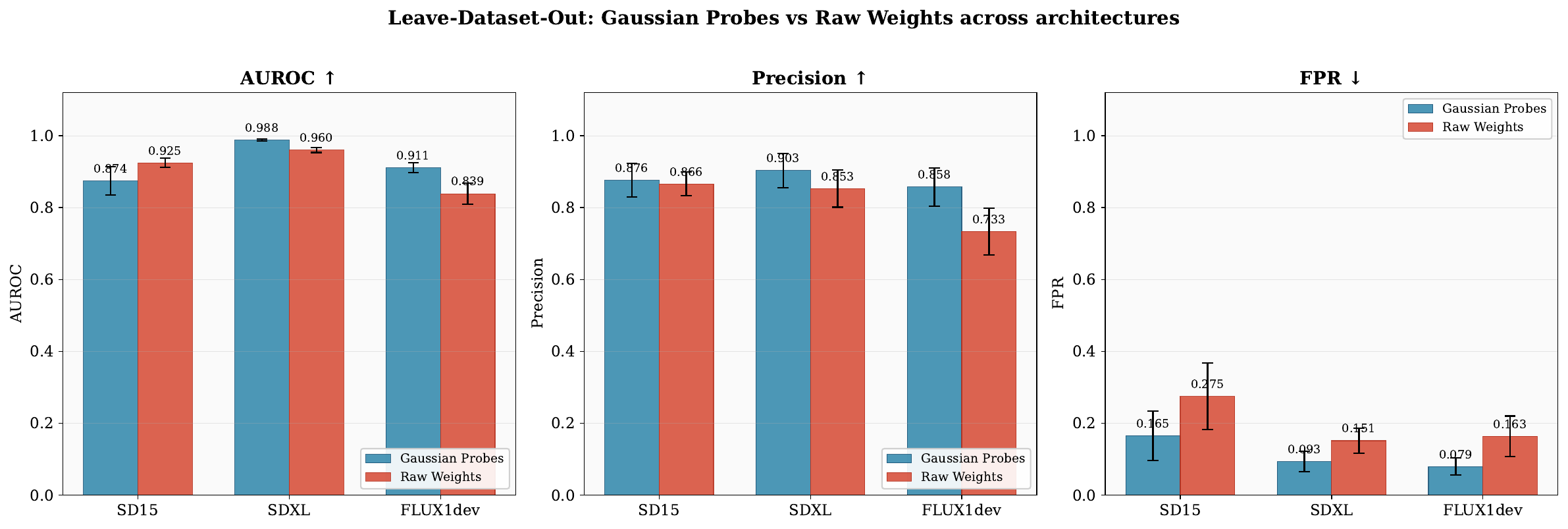}
    \caption{Average AUROC, precision, and FPR across 5 fold CV fo all architectures on the leave-dataset-out evaluation. Randomly projected weights outperform Gaussian probing depending on the model, but we identify that this is due to dataset identity signal in Section~\ref{sec:content_vs_dataset}, which does not satisfy our desiderata. Meanwhile, Gaussian probing still performs well while primarily leveraging the content signal in accordance with our robustness desiderata.}
    \label{fig:controlled_ldo}
\end{figure}

\subsubsection{Separability of SFW and NSFW Specialization}
\label{sec:separation}
We begin with SD 1.5, as this is the simplest of the models and the most studied in the interpretability literature. Both representations achieve strong performance under standard cross-validation~(Figure~\ref{fig:sd15_standard_cv}), with Gaussian probing showing slightly lower FPR. The performance is exceptionally strong and we believe shows the ceiling of what is possible when all confounds are controlled. A more appropriate metric is the leave-dataset-out (LDO) setting. Given that we are leaving one entire SFW and NSFW dataset out during training, we get a better sense of \textit{conceptual generalization}. Randomly projected weights generalize better than gaussian probing but both do well~(Figure~\ref{fig:controlled_ldo}). However, this apparent advantage relies on stable dataset-level artifacts. As we illustrate in the next section, in settings where such artifacts can be manipulated, these representations degrade under even simple transformations, whereas Gaussian probing targets invariant functional signal.

For SDXL 1.0 and FLUX.1-dev, we see similarly high performance in the standard CV evaluation~(Figure~\ref{fig:sdxl_standard_cv} and Figure~\ref{fig:flux1dev_standard_cv}). For LDO evaluation, we find that for SDXL 1.0, probes generalize better. Meanwhile, for FLUX.1-dev, raw weights generalize better. In the next section, we investigate whether this generalization comes from signal that satisfies our auditor desiderata or if it is due to spurious signals that will inevitably undermine robustness. Overall, these results demonstrate that both representations we have presented are capable of distinguishing SFW trained LoRAs from NSFW trained LoRAs. We ensemble probes from both latent diffusion and text encoder modules across all three architectures in Appendix~\ref{app:controlled_module_ensemble} show this is essential for detecting text-encoder-only finetunes for SD 1.5.

% \begin{figure}[t]
%     \centering
%     \includegraphics[width=\linewidth]{Figures/adult_sexual_content/method_comparison_ldo_SD15_bars.pdf}
%     \caption{Caption}
%     \label{fig:sd15_ldo_bars}
% \end{figure}

\begin{figure}[t]
    \centering
    \includegraphics[width=\linewidth]{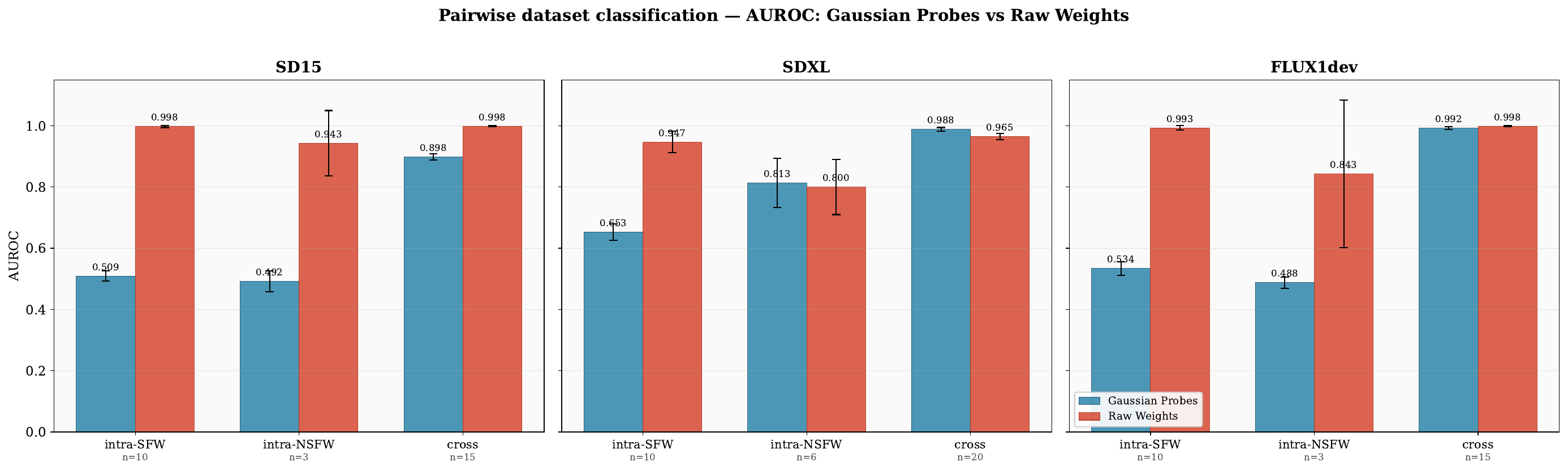}
    \caption{Average AUROC values for intra-SFW, intra-NSFW, and cross-class dataset pairs for both raw weights and Gaussian probing. For SD 1.5 and FLUX.1-dev Gaussian probing clearly is relying on content signal given both intra-SFW and intra-NSFW AUROC are near random while cross-class AUROC is high. This suggests that Gaussian probing encodes a signal that is useful for distinguishing the two concepts, without relying on lower-level dataset-specific properties.}
    \label{fig:contentvsdataset}
\end{figure}

\subsubsection{Content Signal vs. Dataset Identity}
\label{sec:content_vs_dataset}
We next evaluate how well the representations satisfy the robustness desideratum, in particular whether they capture content signal rather than dataset identity. The leave-dataset-out results in Section~\ref{sec:separation} left open whether raw weights' advantage reflects genuine signal or exploitable artifacts. This section shows it's the latter.

To isolate the source of this signal, we construct a pairwise dataset classification task over SFW and NSFW LoRAs. For each pair of datasets, we train classifiers using both randomly projected weight and Gaussian probing features, and measure performance within and across classes. A representation that primarily encodes dataset identity will achieve high and consistent performance across intra-SFW, intra-NSFW, and cross-class comparisons, whereas a representation that captures content signal should perform well only on cross-class distinctions. 

First, we examine whether simple training artifacts could explain the observed separability. In particular, we compare the norms of LoRAs across and within classes and find them to be similar, reducing the likelihood that magnitude alone drives classification.  Additionally, given that we randomly selected across all training hyperparameters we capture the vast space of possible configurations. Thus, we are less concerned that the representations are separable by training artifact information alone. 

A representation that encodes dataset identity will separate any two datasets regardless of class. A representation that encodes content signal will separate only across class (SFW vs. NSFW), not within. Figure~\ref{fig:contentvsdataset} shows which regime each method is in. Overall, we find that Gaussian probes exhibit a stronger reliance on content signal. This is most evident in SD 1.5 and FLUX.1-dev, where projected raw weights can distinguish between datasets regardless of class, indicating sensitivity to dataset-specific artifacts rather than underlying content. Interestingly, for SDXL 1.0 where Gaussian probes do look like they use a combination of dataset identity and content signal it outperforms the randomly projected weights in the LDO evaluation. These results demonstrate that while Gaussian probes tend to perform worse on the LDO evaluation than randomly projected weights, they are using the ideal signal. Thus, we believe they are more robust and satisfy the auditor desiderata we outlined.

\subsection{In the Wild Study}
\subsubsection{Are we relying on development artifacts \& community norms?}
\label{sec:whyrobust}

The focus on classifying purely using content signal over both dataset identity and training artifacts may seem unnecessarily restrictive. Instead, one might argue for leveraging all available signals to maximize classification accuracy, regardless of whether it is based on the content or the training development artifacts. If such artifacts are stable and consistent in the wild on platforms such as HuggingFace, CivitAI, or CivArchive, why not exploit them? This line of reasoning seems reasonable, especially for high-stake domains like CSAM, where errors can have enormous costs. 

The problem is that relying on these artifacts introduces a fundamental vulnerability. Unlike content signals, which require meaningful manipulation to the model's behavior, training artifacts are superficial. Adversaries can easily manipulate them, mimicking benign training patterns, without altering what the LoRA actually does. As a result, defenses that rely on such artifacts are likely to degrade quickly once adversaries adapt.

\paragraph{Experimental Setup} To illustrate this vulnerability, we conduct a small experiment using LoRAs sourced from CivitAI. We collect 1,118 SDXL 1.0 LoRAs from the CivitAI platform, 493 of which are trained on SFW content and 525 on adult sexual content. Assignment of LoRAs into SFW vs. NSFW classes is based on a combination of LoRA metadata, an NSFW classifier applied to associated gallery images, and user-reported flags.

We analyze metadata across both classes and observe systematic differences in training practices. Specifically, the median number of training steps, the median learning rate, and the median LoRAs rank are all higher for the NSFW LoRAs (Table~\ref{tab:lora-config-comparison}), suggesting larger update magnitudes. We perform k-fold cross-validation using random train-test splits of our data, comparing the random projections classifier with the Gaussian probes classifier. We find that both perform equally well, initially suggesting that perhaps the main benefit of Gaussian probing is its memory efficiency.

\begin{table}[t]
\centering
\begin{tabular}{lcc}
\toprule
\textbf{Field} & \textbf{SFW} & \textbf{NSFW} \\
\midrule
Training steps     & 1616 \scriptsize{[1000--3342]} \tiny{($n=363$)} & 1776 \scriptsize{[864--4000]} \tiny{($n=349$)} \\
Learning rate      & 0.0001 \scriptsize{[0.0001--0.0005]} \tiny{($n=359$)} & 0.0002 \scriptsize{[0.0001--0.0005]} \tiny{($n=346$)} \\
Network dim        & 32 \scriptsize{[16--66]} \tiny{($n=376$)} & 64 \scriptsize{[32--256]} \tiny{($n=520$)} \\
Network alpha      & 16 \scriptsize{[8--64]} \tiny{($n=375$)} & 32 \scriptsize{[16--256]} \tiny{($n=520$)} \\
LoRA rank          & 32 \scriptsize{[16--70]} \tiny{($n=418$)} & 64 \scriptsize{[32--256]} \tiny{($n=546$)} \\
Num.\ train images & 360 \scriptsize{[202--1482]} \tiny{($n=361$)} & 590 \scriptsize{[220--1260]} \tiny{($n=349$)} \\
Num.\ epochs       & 10 \scriptsize{[8--14]} \tiny{($n=361$)} & 10 \scriptsize{[8--20]} \tiny{($n=349$)} \\
Num.\ tensors      & 792 \scriptsize{[792--792]} \tiny{($n=508$)} & 792 \scriptsize{[792--834]} \tiny{($n=547$)} \\
\bottomrule
\end{tabular}
\caption{Training configuration comparison between SFW and NSFW SDXL 1.0 LoRA models (median [Q25--Q75]).}
\label{tab:lora-config-comparison}
\end{table}

However, further analysis reveals that classifiers trained on raw weights are brittle. In fact, we found that if we simply train a classifier on feature vectors comprised of the Frobenius norm of each layer, this performed comparably to the random projections method. The classifier suggested that it was using the magnitude as a shortcut, given the differences in training procedures shown by the metadata. 

This leaves a clear vulnerability for an adversary to manipulate by simply equalizing the norms of their NSFW (or CSAM) LoRAs to match those of publicly available SFW LoRAs. We simulate this attack with our sample of SDXL LoRAs. We train classifiers on the unnormalized LoRAs and then test them on LoRAs where each layer's norm is set to unit Frobenius norm. We see that the raw weights classifier experiences a significant drop in performance across most metrics, particularly AUROC, AUPRC, precision, and FPR. FPR in particular increases by 20\%. Meanwhile the Gaussian probes classifier remains the same or even improves slightly (depending on the metric) on these normalized LoRAs at test time (Figure~\ref{fig:magnitude_robustness}).

\begin{figure}[t]
    \centering
    \includegraphics[width=\linewidth]{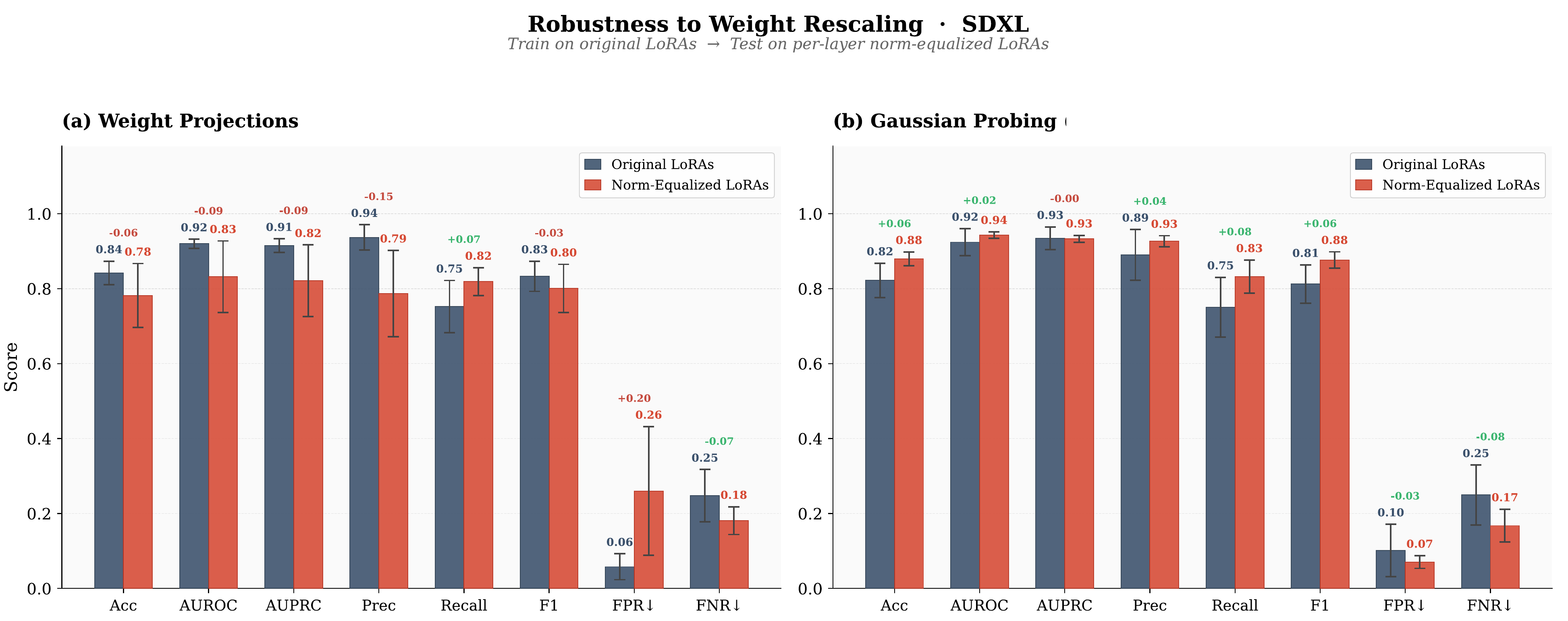}
    \caption{\textbf{Robustness to weight rescaling under an adversarial setting.} Both methods are trained on unmodified CivitAI LoRAs and evaluated on LoRAs whose per-layer weight deltas have been rescaled to unit Frobenius norm, removing magnitude information while preserving functional behavior. (a) Classification using random projections of weight matrices degrades substantially across all metrics, with AUROC dropping from 0.920 to 0.832 and confidence interval widening from ±0.012 to ±0.095. (b) Gaussian probing is robust to the same manipulation, with AUROC increasing slightly from 0.924 to 0.943 and variance decreasing. This confirms the importance of relying on content signal over training artifacts differences.}
\label{fig:magnitude_robustness}
\end{figure}

%% file: sections/child_sexual_abuse_content_detection.tex
\begin{figure}[t]
    \centering
    \includegraphics[width=\linewidth]{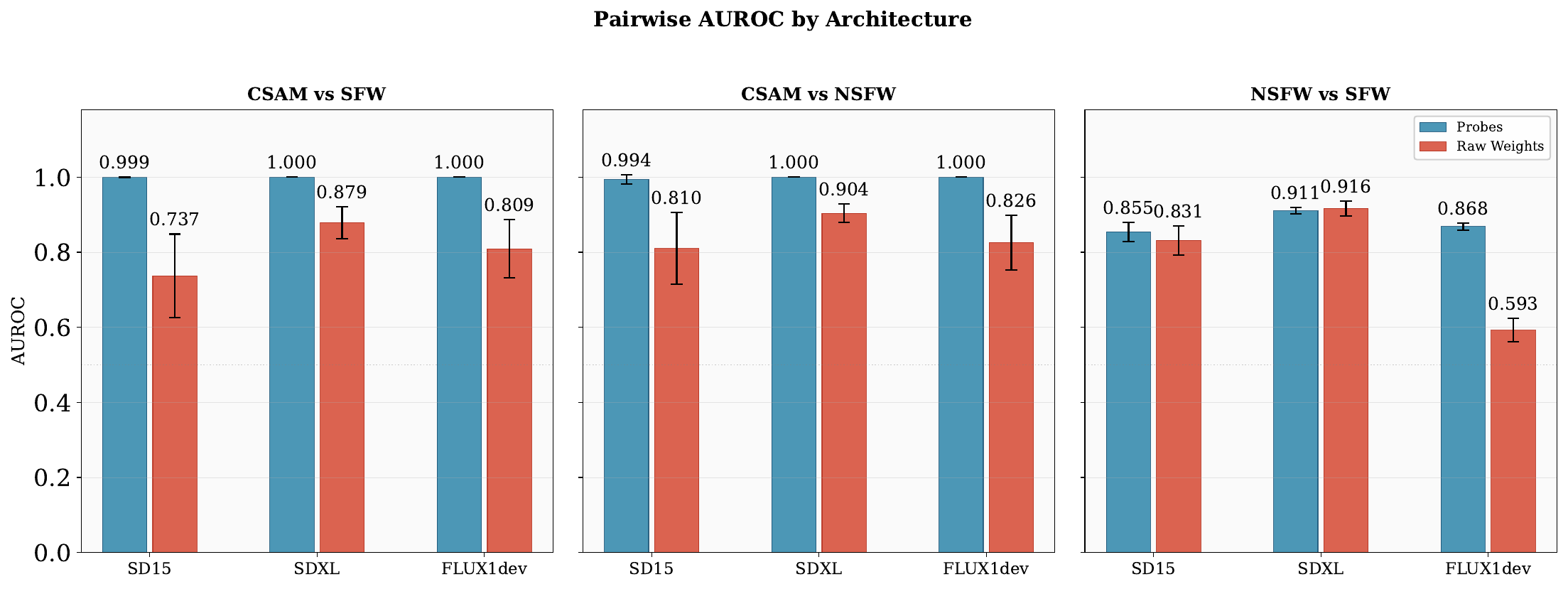}
    \caption{AUROC across our three models (SD 1.5, SDXL 1.0, and FLUX.1-dev) computed for each pair of classes (SFW vs. CSAM), (NSFW vs. CSAM), and (SFW vs. NSFW). Gaussian probing separates CSAM from both SFW and NSFW across all three architectures. Raw weights fail on small-sample CSAM detection (AUROC 0.68-0.87). NSFW vs CSAM should be viewed as the most difficult of the three. Although, raw weights do perform comparably on some of the folds, there is high variance. Meanwhile the methods perform more comparably when looking at SFW vs NSFW.}
    \label{fig:csam_auroc}
\end{figure}

In this section, we focus on the motivating application of this work: detecting LoRAs specialized for generating child sexual abuse material using Gaussian Probing. Having demonstrated the effectiveness of the different representations for LoRA classification in satisfying our desiderata of non-generative, scalability, and robustness (Sections \ref{sec:content_vs_dataset} and \ref{sec:whyrobust}), we focus on applying our algorithm on separating between LoRAs specialized to generate SFW, NSFW, and CSAM content. We did not possess or generate any CSAM data or CSAM-specific LoRAs, nor did we train any LoRAs on such material; all such data remained solely with authorized entities and handling of the data was done in accordance with applicable laws and organizational safeguards. We intentionally omit further details to reduce misuse risk and preserve operational security. We discuss our reasoning for omitting further details in Section~\ref{sec:ethics}.

\paragraph{Experimental Setup} For this section, our goal is to simulate deployment of this approach in the wild. Thus, we source around 1,000 LoRAs from CivitAI for the SFW and NSFW classes. Additionally, for the purposes of this research, we accessed CSAM-specialized LoRAs through authorized entities, without handling underlying CSAM data and in compliance with applicable laws and organizational safeguards. We discuss our choice to intentionally omit all other details regarding access to these LoRAs in the Ethical Considerations section of the paper. For SD 1.5, our sample size consists of 18 CSAM-specialized LoRAs, for SDXL 1.0 our sample size is 34 CSAM-specialized LoRAs, and for FLUX.1-dev our sample size is 74 CSAM-specialized LoRAs. We test the performance of the different representations in this prediction task across the same three models as in the previous section. For Gaussian probing, we sample 512 probes for SD 1.5 and SDXL 1.0. Due to computational and operational security constraints surrounding our CSAM-specialized FLUX.1-dev LoRAs we only use 4 probes. For raw weights we use a projection dimension of 256 for every layer.  Given that we no longer are controlling for any of the confounds we expect to see in the wild, we expect our method to perform well but less strong as our controlled experiments. As in Section~\ref{Sec:Explore}, we focus on AUROC, precision, and FPR as our primary metrics of interest. We report additional metrics in the Appendix.

\paragraph{Pairwise Detection} Across all three models and classes, Gaussian probing stands out as the better representation for detection. Raw weight projections, given their high dimensionality, struggle to classify CSAM-specialized LoRAs given their small sample size. Meanwhile, Gaussian probing is effective at correctly classifying all CSAM-specialized LoRAs (Figure~\ref{fig:csam_auroc} and Figure~\ref{fig:csam_auprc})). We view these results as promising for the effectiveness of Gaussian probing for our motivating application. At the same time, these results should be interpreted as evidence of separability under the current data conditions, rather than as a claim of near-perfect detection in deployment settings.

\begin{figure}
    \centering
    \includegraphics[width=\linewidth]{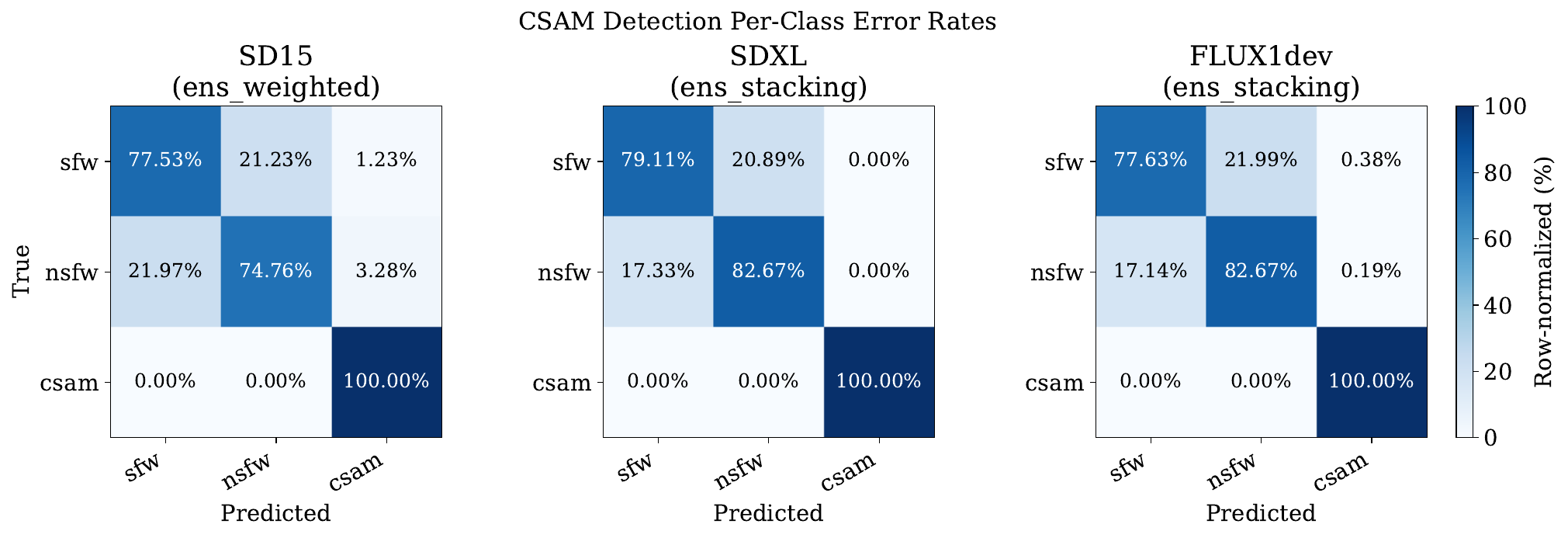}
    \caption{CSAM detection error rates by true class, across architectures averaged over 5 folds. Gaussian probing achieves 100\% recall on CSAM across SD 1.5, SDXL, and FLUX.1-dev (bottom row), with under 1\% of SFW LoRAs misclassified as CSAM for SD 1.5 and SDXL. The 2–4\% of NSFW LoRAs predicted as CSAM may reflect genuine CSAM contamination in CivitAI uploads rather than classifier errors, and warrant follow-up inspection about which of the two they are.}
    \label{fig:csam_errors}
\end{figure}

\paragraph{Per-Class Results and Errors} Within our results, we investigate the distribution of errors made by our probe classifier. In particular, we are interested in understanding how many SFW and NSFW LoRAs are classified as being CSAM-specialized LoRAs. While SFW LoRAs can most likely be considered true errors, it is possible that some of the NSFW LoRAs that are classified as being CSAM-specialized were in fact trained with some amount of CSAM data during finetuning. We find that across model architectures, probes classify all CSAM models correctly and have low false positive rates for classifying both SFW and NSFW models as CSAM. We see the highest FPR for FLUX.1-dev models at 4.21\% of SFW models being classified as CSAM specialized and 1.90\% of NSFW models being classified as CSAM (Figure~\ref{fig:csam_errors}). We view the confusion between SFW and CSAM as true errors, while the NSFW models could in fact be undetected CSAM-specialized models, warranting further inspection to distinguish true errors from contamination.

\paragraph{Design Choices} In order to maximize the performance of Gaussian probing, for we needed to ensemble predictions across multiple layers. We show in Appendix~\ref{app:layer_choice_itw} that without doing so Gaussian probing performs worse. While not surprising given the scale of these models, it does point to layer choice being an important consideration for applying Gaussian probing to future architectures. For SD 1.5, in accordance with existing literature on the existence of the \textit{h-space}~\cite{kwon2022diffusion}, we used the representation from after the mid-block in the U-Net as a starting point, combined with additional layers. Based on this insight, we started with a mid-block representation for all three models. For SDXL 1.0 we expanded to representations in the middle of the beginning block of the U-Net and the middle of the end block of the U-Net. For FLUX.1-dev we simply took one layer in the first third, second third, and last third of the transformer model for both the text and image inputs. 

We also explore different ensembling strategies across our different model architectures to combine the classifiers trained on each individual layer's probes into a single prediction. The four strategies we try are all-concat, soft voting, weight soft-voting, and stacking. All-concat simply concatenates all the probes into one vector and trains a linear classifier on this concatentated vector. Soft voting averages the predicted class probabilities across all per-layer classifiers with equal weight, yielding the simplest and assumption-free combination. Weighted soft voting generalises this by learning a single non-negative weight per layer on a held-out validation split weights are parameterised through a softmax and optimised by Nelder-Mead to minimise the cross-entropy of the weighted-average probabilities, so keys that are individually more informative dominate the ensemble. Stacking instead trains a multinomial logistic-regression meta-learner on the validation set using the concatenated per-layer class-probability vectors as meta-features; this allows the meta-learner to capture interactions between keys rather than only re-scaling them. Overall the best strategy varies between architectures but the differences between them are marginal indicating that there isn't much optimization over which strategy to use needed (Appendix~\ref{app:ensembling_choice_itw}).

\begin{figure}[t]
    \centering
    \includegraphics[width=\linewidth]{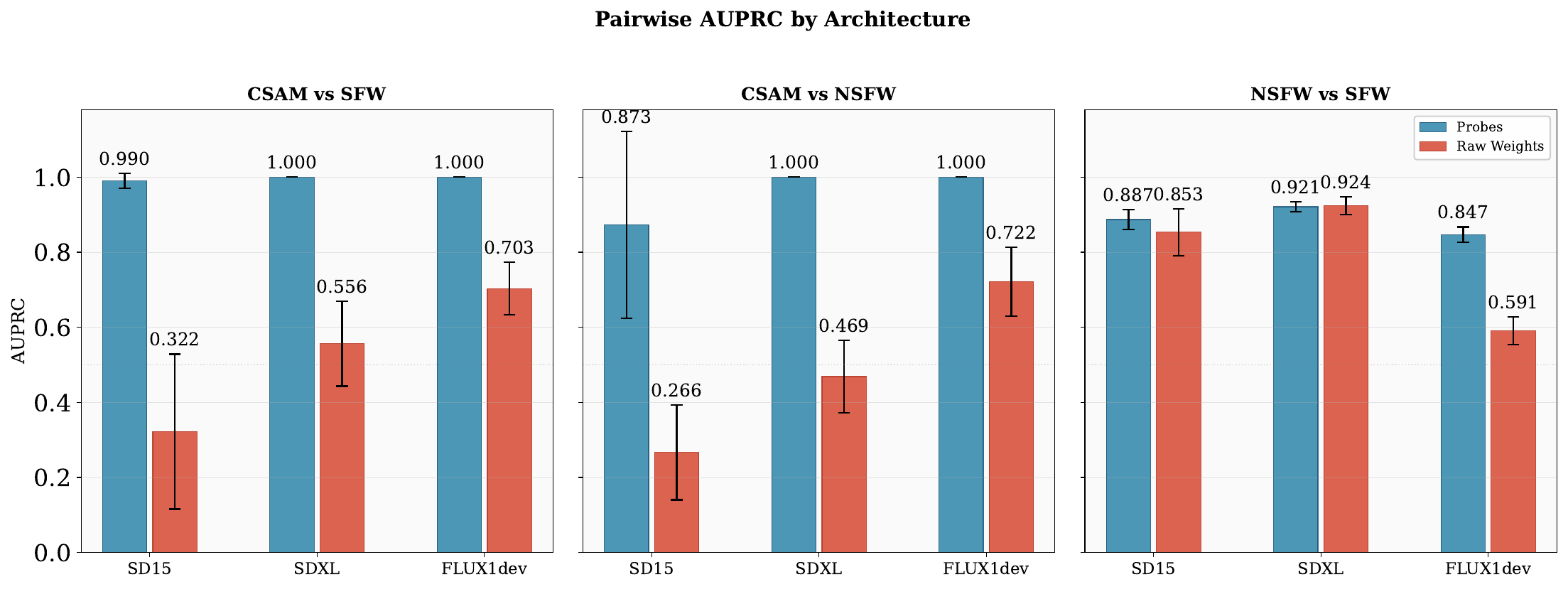}
    \caption{AUPRC across our three models (SD 1.5, SDXL 1.0, and FLUX.1-dev) computed for each pair of classes (SFW vs. CSAM), (NSFW vs. CSAM), and (SFW vs. NSFW). Note that Gaussian probing outperforms raw weights significantly when tasked with distinguishing CSAM from either SFW or NSFW. NSFW vs CSAM should be viewed as the most difficult of the three. The methods perform more comparably when looking at SFW vs NSFW.}
    \label{fig:csam_auprc}
\end{figure}

%% file: sections/discussion.tex
\setlength{\parskip}{0pt}
%Summary of paper's conclusions
Assessing model capabilities and specialization for harmful content has so far focused solely on \textit{generative evaluations}. Generative evaluations capture what models generate under curated prompts, not what they are capable of generating under adversarial or untested conditions. In domains where generation is restricted or unlawful, this creates a measurement gap precisely where risk is highest. In applications such as CSAM detection, output-based evaluation is both limited and legally challenging, motivating alternative approaches to capability assessment. In particular, these applications warrant the need for \textit{non-generative assessments}.
% \ascomment{Also, there is a computational cost of hiring someone to do this!}  \ascomment{What about automated red-teaming? Have people looked at automated approaches to detect CSAM?} 

In this work, we show that analyzing model weights, whether via mechanistic or functional representations, provides a suitable approach for non-generative evaluations of model specialization; a narrower problem within broader capability assessment motivated by open-weight platform governance challenges. This work provides a practical and scalable solution to address this broad governance challenge. We validate our method on our motivating application, which is detecting CSAM specialization in the wild.

Non-generative evaluation is essential in limited and legally challenging applications such as CSAM or NCII detection. In addition, we view it as complementary in harmful content areas where generative evaluations are allowed, such as bioweapons, hate speech, and cyberattack knowledge. Non-generative evaluations can help reduce the need for manual human review of large scale outputs produced in the generative evaluation paradigm, helping to reduce the psychological harms experienced by human reviewers. Our approach, Gaussian probing, supports the ongoing discussions in the weight-space learning community about the importance of mechanistic vs. functional representations. In particular, we show the value of functional representations across scalability/efficiency and robustness. We provide specific criteria that can be useful for future weight-space learning research to use for evaluating new methods for applications where weights are the data modality.

Our work addresses a central open problem in AI child safety regarding evaluation restrictions on AIG-CSAM capabilities. Having demonstrated the effectiveness of different representations we show that non-generative evaluation is possible. We believe that this now sets the stage for future research to continue to study this direction answering questions related to potential vulnerabilities, theoretical underpinnings, and building more robust algorithms. 

Solutions to the evaluation without generation problem are widely applicable for a variety of stakeholders including  open-weight hosting platforms, child safety organizations, and regulatory authorities. Model hosting platforms represent the most immediate application of this approach.  It enables scalable pre-screening of uploaded LoRAs, addressing a key limitation of output-based moderation workflows. This tool can also help improve existing concept tagging systems on these platforms. NSFW labeling on model hosting platforms can be quite noisy, as it is creator-driven. Many SFW LoRAs are often labeled incorrectly and are in fact NSFW specialized. Our tool can help reduce these incorrectly labeled uploads and provide a better user experience. 

At the same time, capability-level filtering raises important questions about moderation and censorship. Any system that flags model capabilities must previously define which capabilities are unacceptable. Setting these boundaries does not just include technical decisions; it also requires discussions around (moral) values, defining which model behaviors should be restricted. Screening systems must therefore balance preventing harm and limiting legitimate uses of generative models. Without clear policies and human oversight, automated filtering could easily drift towards forms of censorship.

Additionally, much more research is required to fully characterize this problem and building a system that is robust to many kinds of adversarial attacks that could evade detection. In this work, we demonstrate one kind of attack via weight rescaling but we encourage further research to understand how other kinds of weight manipulations can evade both our projected weights classifier and our Gaussian probing classifier. We also focus on a narrow version of the evaluation without generation problem which is harmful LoRA finetuning. Further work is needed to determine whether it is possible to answer this question affirmatively using pretrained models alone. In the pretrained model setting, issues of scale and label availability become even more pertinent. 

Overall, this work reframes evaluation as a weight-space auditing problem and demonstrates a scalable approach to pre-deployment screening of harmful specialization. Having answered our original question in the affirmative it lays a foundation for further work in this important research and application area. Such methods may reduce reliance on human review and enable capability assessment in domains where output-based evaluation is constrained or infeasible. In practice, this provides a pathway for platform-level risk screening of fine-tuned models, supporting safer deployment while balancing open innovation with preventive governance.

%% file: references.bib
@article{dravid2024interpreting,
  title={Interpreting the weight space of customized diffusion models},
  author={Dravid, Amil and Gandelsman, Yossi and Wang, Kuan-Chieh and Abdal, Rameen and Wetzstein, Gordon and Efros, Alexei and Aberman, Kfir},
  journal={Advances in Neural Information Processing Systems},
  volume={37},
  pages={137334--137371},
  year={2024}
}

@article{khader2024diffguard,
  title={Diffguard: Text-based safety checker for diffusion models},
  author={Khader, Massine El and Bouzidi, Elias Al and Oumida, Abdellah and Sbaihi, Mohammed and Binard, Eliott and Poli, Jean-Philippe and Ouerdane, Wassila and Addad, Boussad and Kapusta, Katarzyna},
  journal={arXiv preprint arXiv:2412.00064},
  year={2024}
}

@article{li2025diffuguard,
  title={Diffuguard: How intrinsic safety is lost and found in diffusion large language models},
  author={Li, Zherui and Nie, Zheng and Zhou, Zhenhong and Liu, Yue and Zhang, Yitong and Cheng, Yu and Wen, Qingsong and Wang, Kun and Guo, Yufei and Zhang, Jiaheng},
  journal={arXiv preprint arXiv:2509.24296},
  year={2025}
}

@inproceedings{qu2023unsafe,
  title={Unsafe diffusion: On the generation of unsafe images and hateful memes from text-to-image models},
  author={Qu, Yiting and Shen, Xinyue and He, Xinlei and Backes, Michael and Zannettou, Savvas and Zhang, Yang},
  booktitle={Proceedings of the 2023 ACM SIGSAC conference on computer and communications security},
  pages={3403--3417},
  year={2023}
}

@article{luccioni2023stable,
  title={Stable bias: Evaluating societal representations in diffusion models},
  author={Luccioni, Sasha and Akiki, Christopher and Mitchell, Margaret and Jernite, Yacine},
  journal={Advances in Neural Information Processing Systems},
  volume={36},
  pages={56338--56351},
  year={2023}
}

@article{lee2023holistic,
  title={Holistic evaluation of text-to-image models},
  author={Lee, Tony and Yasunaga, Michihiro and Meng, Chenlin and Mai, Yifan and Park, Joon Sung and Gupta, Agrim and Zhang, Yunzhi and Narayanan, Deepak and Teufel, Hannah and Bellagente, Marco and others},
  journal={Advances in Neural Information Processing Systems},
  volume={36},
  pages={69981--70011},
  year={2023}
}

@article{wei2023jailbroken,
  title={Jailbroken: How does llm safety training fail?},
  author={Wei, Alexander and Haghtalab, Nika and Steinhardt, Jacob},
  journal={Advances in neural information processing systems},
  volume={36},
  pages={80079--80110},
  year={2023}
}

@inproceedings{jin2025jailbreakdiffbench,
  title={JailbreakDiffBench: A Comprehensive Benchmark for Jailbreaking Diffusion Models},
  author={Jin, Xiaolong and Weng, Zixuan and Guo, Hanxi and Yin, Chenlong and Cheng, Siyuan and Shen, Guangyu and Zhang, Xiangyu},
  booktitle={Proceedings of the IEEE/CVF International Conference on Computer Vision},
  pages={16461--16471},
  year={2025}
}

@inproceedings{ma2025jailbreaking,
  title={Jailbreaking prompt attack: A controllable adversarial attack against diffusion models},
  author={Ma, Jiachen and Li, Yijiang and Xiao, Zhiqing and Cao, Anda and Zhang, Jie and Ye, Chao and Zhao, Junbo},
  booktitle={Findings of the Association for Computational Linguistics: NAACL 2025},
  pages={3141--3157},
  year={2025}
}

@inproceedings{li2024shake,
  title={Shake to leak: Fine-tuning diffusion models can amplify the generative privacy risk},
  author={Li, Zhangheng and Hong, Junyuan and Li, Bo and Wang, Zhangyang},
  booktitle={2024 IEEE Conference on Secure and Trustworthy Machine Learning (SaTML)},
  pages={18--32},
  year={2024},
  organization={IEEE}
}

@article{truong2025attacks,
  title={Attacks and defenses for generative diffusion models: A comprehensive survey},
  author={Truong, Vu Tuan and Dang, Luan Ba and Le, Long Bao},
  journal={ACM Computing Surveys},
  volume={57},
  number={8},
  pages={1--44},
  year={2025},
  publisher={ACM New York, NY}
}

@article{zheng2023understanding,
  title={Understanding and improving adversarial attacks on latent diffusion model},
  author={Zheng, Boyang and Liang, Chumeng and Wu, Xiaoyu and Liu, Yan},
  year={2023}
}

@article{rando2022red,
  title={Red-teaming the stable diffusion safety filter},
  author={Rando, Javier and Paleka, Daniel and Lindner, David and Heim, Lennart and Tram{\`e}r, Florian},
  journal={arXiv preprint arXiv:2210.04610},
  year={2022}
}

@article{fan2023salun,
  title={Salun: Empowering machine unlearning via gradient-based weight saliency in both image classification and generation},
  author={Fan, Chongyu and Liu, Jiancheng and Zhang, Yihua and Wong, Eric and Wei, Dennis and Liu, Sijia},
  journal={arXiv preprint arXiv:2310.12508},
  year={2023}
}

@inproceedings{wu2024scissorhands,
  title={Scissorhands: Scrub data influence via connection sensitivity in networks},
  author={Wu, Jing and Harandi, Mehrtash},
  booktitle={European Conference on Computer Vision},
  pages={367--384},
  year={2024},
  organization={Springer}
}

@article{wu2024erasediff,
  title={Erasediff: Erasing data influence in diffusion models},
  author={Wu, Jing and Le, Trung and Hayat, Munawar and Harandi, Mehrtash},
  journal={arXiv preprint arXiv:2401.05779},
  year={2024}
}

@inproceedings{lu2024mace,
  title={Mace: Mass concept erasure in diffusion models},
  author={Lu, Shilin and Wang, Zilan and Li, Leyang and Liu, Yanzhu and Kong, Adams Wai-Kin},
  booktitle={Proceedings of the IEEE/CVF Conference on Computer Vision and Pattern Recognition},
  pages={6430--6440},
  year={2024}
}

@misc{zhang2024defensiveunlearningadversarialtraining,
  title={Defensive Unlearning with Adversarial Training for Robust Concept Erasure in Diffusion Models},
  author={Yimeng Zhang and Xin Chen and Jinghan Jia and Yihua Zhang and Chongyu Fan and Jiancheng Liu and Mingyi Hong and Ke Ding and Sijia Liu},
  year={2024},
  eprint={2405.15234},
  archivePrefix={arXiv},
  primaryClass={cs.CV},
  url={https://arxiv.org/abs/2405.15234},
}

@misc{huang2024recelerreliableconcepterasing,
  title={Receler: Reliable Concept Erasing of Text-to-Image Diffusion Models via Lightweight Erasers},
  author={Chi-Pin Huang and Kai-Po Chang and Chung-Ting Tsai and Yung-Hsuan Lai and Fu-En Yang and Yu-Chiang Frank Wang},
  year={2024},
  eprint={2311.17717},
  archivePrefix={arXiv},
  primaryClass={cs.CV},
  url={https://arxiv.org/abs/2311.17717},
}

@inproceedings{lin2014microsoft,
  title={Microsoft coco: Common objects in context},
  author={Lin, Tsung-Yi and Maire, Michael and Belongie, Serge and Hays, James and Perona, Pietro and Ramanan, Deva and Doll{\'a}r, Piotr and Zitnick, C Lawrence},
  booktitle={Computer Vision--ECCV 2014: 13th European Conference, Zurich, Switzerland, September 6-12, 2014, Proceedings, Part V 13},
  pages={740--755},
  year={2014},
  organization={Springer}
}

@misc{von-platen-etal-2022-diffusers,
author = {Patrick von Platen and Suraj Patil and Anton Lozhkov and Pedro Cuenca and Nathan Lambert and Kashif Rasul and Mishig Davaadorj and Dhruv Nair and Sayak Paul and William Berman and Yiyi Xu and Steven Liu and Thomas Wolf},
title = {Diffusers: State-of-the-art diffusion models},
year = {2022},
publisher = {GitHub},
journal = {GitHub repository}
}

@article{fuchi2024erasing,
  title={Erasing Concepts from Text-to-Image Diffusion Models with Few-shot Unlearning},
  author={Fuchi, Masane and Takagi, Tomohiro},
  journal={arXiv preprint arXiv:2405.07288},
  year={2024}
}

@inproceedings{gandikota2024unified,
  title={Unified concept editing in diffusion models},
  author={Gandikota, Rohit and Orgad, Hadas and Belinkov, Yonatan and Materzy{\'n}ska, Joanna and Bau, David},
  booktitle={Proceedings of the IEEE/CVF Winter Conference on Applications of Computer Vision},
  pages={5111--5120},
  year={2024}
}

@inproceedings{yang2024sneakyprompt,
  title={Sneakyprompt: Jailbreaking text-to-image generative models},
  author={Yang, Yuchen and Hui, Bo and Yuan, Haolin and Gong, Neil and Cao, Yinzhi},
  booktitle={2024 IEEE symposium on security and privacy (SP)},
  pages={897--912},
  year={2024},
  organization={IEEE}
}

@article{ko2024boosting,
  title={Boosting alignment for post-unlearning text-to-image generative models},
  author={Ko, Myeongseob and Li, Henry and Wang, Zhun and Patsenker, Jonathan and Wang, Jiachen Tianhao and Li, Qinbin and Jin, Ming and Song, Dawn and Jia, Ruoxi},
  journal={Advances in Neural Information Processing Systems},
  volume={37},
  pages={85131--85154},
  year={2024}
}

@inproceedings{gandikota2023erasing,
  title={Erasing concepts from diffusion models},
  author={Gandikota, Rohit and Materzynska, Joanna and Fiotto-Kaufman, Jaden and Bau, David},
  booktitle={Proceedings of the IEEE/CVF International Conference on Computer Vision},
  pages={2426--2436},
  year={2023}
}

@inproceedings{suriyakumar2024unstable,
  title={Unstable unlearning: The hidden risk of concept resurgence in diffusion models},
  author={Suriyakumar, Vinith Menon and Alur, Rohan and Sekhari, Ayush and Raghavan, Manish and Wilson, Ashia C},
  booktitle={ICLR 2025 Workshop on Navigating and Addressing Data Problems for Foundation Models},
  year={2024}
}

@inproceedings{gao2025meta,
  title={Meta-unlearning on diffusion models: Preventing relearning unlearned concepts},
  author={Gao, Hongcheng and Pang, Tianyu and Du, Chao and Hu, Taihang and Deng, Zhijie and Lin, Min},
  booktitle={Proceedings of the IEEE/CVF International Conference on Computer Vision},
  pages={2131--2141},
  year={2025}
}

@article{cretu2025evaluating,
  title={Evaluating Concept Filtering Defenses against Child Sexual Abuse Material Generation by Text-to-Image Models},
  author={Cretu, Ana-Maria and Kireev, Klim and Abdalla, Amro and Obinna, Wisdom and Meier, Raphael and Bargal, Sarah Adel and Redmiles, Elissa M and Troncoso, Carmela},
  journal={arXiv preprint arXiv:2512.05707},
  year={2025}
}

@article{lu2025concepts,
  title={When Are Concepts Erased From Diffusion Models?},
  author={Lu, Kevin and Kriplani, Nicky and Gandikota, Rohit and Pham, Minh and Bau, David and Hegde, Chinmay and Cohen, Niv},
  journal={arXiv preprint arXiv:2505.17013},
  year={2025}
}

@misc{kohya_ss,
  author = {bmaltais},
  title = {kohya\_ss: GUI and CLI for training diffusion models},
  year = {2025},
  howpublished = {\url{https://github.com/bmaltais/kohya_ss}},
  note = {Accessed: 2026-04-23}
}

@article{kwon2022diffusion,
  title={Diffusion models already have a semantic latent space},
  author={Kwon, Mingi and Jeong, Jaeseok and Uh, Youngjung},
  journal={arXiv preprint arXiv:2210.10960},
  year={2022}
}

@misc{invokeai,
  title = {InvokeAI},
  author = {{Invoke AI Contributors}},
  year = {2026},
  url = {https://github.com/invoke-ai/InvokeAI}
}

@article{hu2022lora,
  title={Lora: Low-rank adaptation of large language models.},
  author={Hu, Edward J and Shen, Yelong and Wallis, Phillip and Allen-Zhu, Zeyuan and Li, Yuanzhi and Wang, Shean and Wang, Lu and Chen, Weizhu and others},
  journal={ICLR},
  volume={1},
  number={2},
  pages={3},
  year={2022}
}

@book{gillespie2018custodians,
  title={Custodians of the Internet: Platforms, content moderation, and the hidden decisions that shape social media},
  author={Gillespie, Tarleton},
  year={2018},
  publisher={Yale University Press}
}

@inproceedings{steiger2021psychological,
  title={The psychological well-being of content moderators: the emotional labor of commercial moderation and avenues for improving support},
  author={Steiger, Miriah and Bharucha, Timir J and Venkatagiri, Sukrit and Riedl, Martin J and Lease, Matthew},
  booktitle={Proceedings of the 2021 CHI conference on human factors in computing systems},
  pages={1--14},
  year={2021}
}

@book{roberts2019behind,
  title={Behind the screen},
  author={Roberts, Sarah T},
  year={2019},
  publisher={Yale University Press}
}

@article{ganguli2022red,
  title={Red teaming language models to reduce harms: Methods, scaling behaviors, and lessons learned},
  author={Ganguli, Deep and Lovitt, Liane and Kernion, Jackson and Askell, Amanda and Bai, Yuntao and Kadavath, Saurav and Mann, Ben and Perez, Ethan and Schiefer, Nicholas and Ndousse, Kamal and others},
  journal={arXiv preprint arXiv:2209.07858},
  year={2022}
}

@misc{shevlane2023modelevaluationextremerisks,
      title={Model evaluation for extreme risks}, 
      author={Toby Shevlane and Sebastian Farquhar and Ben Garfinkel and Mary Phuong and Jess Whittlestone and Jade Leung and Daniel Kokotajlo and Nahema Marchal and Markus Anderljung and Noam Kolt and Lewis Ho and Divya Siddarth and Shahar Avin and Will Hawkins and Been Kim and Iason Gabriel and Vijay Bolina and Jack Clark and Yoshua Bengio and Paul Christiano and Allan Dafoe},
      year={2023},
      eprint={2305.15324},
      archivePrefix={arXiv},
      primaryClass={cs.AI},
      url={https://arxiv.org/abs/2305.15324}, 
}

@misc{amaye2025objectsegmentation,
  author       = {Mayes, Andrew},
  title        = {Object Segmentation Dataset},
  year         = {2025},
  howpublished = {\url{https://huggingface.co/datasets/amaye15/object-segmentation}},
  note         = {Accessed: 2024-04-08}
}

@misc{nyanko2023danbooru,
  author       = {Nyanko},
  title        = {Danbooru2023},
  year         = {2023},
  howpublished = {\url{https://huggingface.co/datasets/nyanko7/danbooru2023}},
  note         = {Accessed: 2024-04-08}
}

@article{fuest2026diffusion,
  title={Diffusion models and representation learning: A survey},
  author={Fuest, Michael and Ma, Pingchuan and Gui, Ming and Schusterbauer, Johannes and Hu, Vincent Tao and Ommer, Bj{\"o}rn},
  journal={IEEE Transactions on Pattern Analysis and Machine Intelligence},
  year={2026},
  publisher={IEEE}
}

@article{yang2023diffusion,
  title={Diffusion models: A comprehensive survey of methods and applications},
  author={Yang, Ling and Zhang, Zhilong and Song, Yang and Hong, Shenda and Xu, Runsheng and Zhao, Yue and Zhang, Wentao and Cui, Bin and Yang, Ming-Hsuan},
  journal={ACM computing surveys},
  volume={56},
  number={4},
  pages={1--39},
  year={2023},
  publisher={ACM New York, NY, USA}
}

@article{conmy2023towards,
  title={Towards automated circuit discovery for mechanistic interpretability},
  author={Conmy, Arthur and Mavor-Parker, Augustine and Lynch, Aengus and Heimersheim, Stefan and Garriga-Alonso, Adri{\`a}},
  journal={Advances in Neural Information Processing Systems},
  volume={36},
  pages={16318--16352},
  year={2023}
}

@misc{bereska2024mechanisticinterpretabilityaisafety,
      title={Mechanistic Interpretability for AI Safety -- A Review}, 
      author={Leonard Bereska and Efstratios Gavves},
      year={2024},
      eprint={2404.14082},
      archivePrefix={arXiv},
      primaryClass={cs.AI},
      url={https://arxiv.org/abs/2404.14082}, 
}

@misc{morelli2016nsfw,
  title={OpenNSFW: A Dataset for NSFW Image Classification},
  author={Morelli, Andres and others},
  year={2016},
  howpublished={Yahoo open source project},
  url={https://github.com/yahoo/open_nsfw}
}

@misc{unsplashlite,
  title={Unsplash Lite Dataset},
  author={Unsplash},
  year={2017},
  howpublished={\url{https://github.com/unsplash/datasets}}
}

@misc{saleh2015largescaleclassificationfineartpaintings,
      title={Large-scale Classification of Fine-Art Paintings: Learning The Right Metric on The Right Feature}, 
      author={Babak Saleh and Ahmed Elgammal},
      year={2015},
      eprint={1505.00855},
      archivePrefix={arXiv},
      primaryClass={cs.CV},
      url={https://arxiv.org/abs/1505.00855}, 
}

@article{schuhmann2022laion,
  title={Laion-5b: An open large-scale dataset for training next generation image-text models},
  author={Schuhmann, Christoph and Beaumont, Romain and Vencu, Richard and Gordon, Cade and Wightman, Ross and Cherti, Mehdi and Coombes, Theo and Katta, Aarush and Mullis, Clayton and Wortsman, Mitchell and others},
  journal={Advances in neural information processing systems},
  volume={35},
  pages={25278--25294},
  year={2022}
}

@article{kuznetsova2020open,
  title={The open images dataset v4: Unified image classification, object detection, and visual relationship detection at scale},
  author={Kuznetsova, Alina and Rom, Hassan and Alldrin, Neil and Uijlings, Jasper and Krasin, Ivan and Pont-Tuset, Jordi and Kamali, Shahab and Popov, Stefan and Malloci, Matteo and Kolesnikov, Alexander and others},
  journal={International journal of computer vision},
  volume={128},
  number={7},
  pages={1956--1981},
  year={2020},
  publisher={Springer}
}

@inproceedings{changpinyo2021conceptual,
  title={Conceptual 12m: Pushing web-scale image-text pre-training to recognize long-tail visual concepts},
  author={Changpinyo, Soravit and Sharma, Piyush and Ding, Nan and Soricut, Radu},
  booktitle={Proceedings of the IEEE/CVF conference on computer vision and pattern recognition},
  pages={3558--3568},
  year={2021}
}

@inproceedings{plummer2015flickr30k,
  title={Flickr30k entities: Collecting region-to-phrase correspondences for richer image-to-sentence models},
  author={Plummer, Bryan A and Wang, Liwei and Cervantes, Chris M and Caicedo, Juan C and Hockenmaier, Julia and Lazebnik, Svetlana},
  booktitle={Proceedings of the IEEE international conference on computer vision},
  pages={2641--2649},
  year={2015}
}

@misc{iwf2026aigcsamupdate,
  author       = {{Internet Watch Foundation (IWF)}},
  title        = {Harm without Limits: AI child sexual abuse material through the eyes of our Analysts},
  institution  = {Internet Watch Foundation},
  month        = mar,
  year         = {2026},
  url          = {https://www.iwf.org.uk/media/hl1nvdti/iwf-ai-csam-report-2026.pdf}
}

@misc{flux2024,
    author={Black Forest Labs},
    title={FLUX},
    year={2024},
    howpublished={\url{https://github.com/black-forest-labs/flux}},
}

@article{wan2025wan,
  title={Wan: Open and advanced large-scale video generative models},
  author={Wan, Team and Wang, Ang and Ai, Baole and Wen, Bin and Mao, Chaojie and Xie, Chen-Wei and Chen, Di and Yu, Feiwu and Zhao, Haiming and Yang, Jianxiao and others},
  journal={arXiv preprint arXiv:2503.20314},
  year={2025}
}

@inproceedings{rombach2022high,
  title={High-resolution image synthesis with latent diffusion models},
  author={Rombach, Robin and Blattmann, Andreas and Lorenz, Dominik and Esser, Patrick and Ommer, Bj{\"o}rn},
  booktitle={Proceedings of the IEEE/CVF conference on computer vision and pattern recognition},
  pages={10684--10695},
  year={2022}
}

@inproceedings{schurholt2024towards,
  title={Towards scalable and versatile weight space learning},
  author={Sch{\"u}rholt, Konstantin and Mahoney, Michael W and Borth, Damian},
  booktitle={Proceedings of the 41st International Conference on Machine Learning},
  pages={43947--43966},
  year={2024}
}

@misc{unterthiner2021predictingneuralnetworkaccuracy,
      title={Predicting Neural Network Accuracy from Weights}, 
      author={Thomas Unterthiner and Daniel Keysers and Sylvain Gelly and Olivier Bousquet and Ilya Tolstikhin},
      year={2021},
      eprint={2002.11448},
      archivePrefix={arXiv},
      primaryClass={stat.ML},
      url={https://arxiv.org/abs/2002.11448}, 
}

@misc{kahana2024deeplinearprobegenerators,
      title={Deep Linear Probe Generators for Weight Space Learning}, 
      author={Jonathan Kahana and Eliahu Horwitz and Imri Shuval and Yedid Hoshen},
      year={2024},
      eprint={2410.10811},
      archivePrefix={arXiv},
      primaryClass={cs.LG},
      url={https://arxiv.org/abs/2410.10811}, 
}

@inproceedings{Schurholt2021,
 author = {Sch\"{u}rholt, Konstantin and Kostadinov, Dimche and Borth, Damian},
 booktitle = {Advances in Neural Information Processing Systems},
 editor = {M. Ranzato and A. Beygelzimer and Y. Dauphin and P.S. Liang and J. Wortman Vaughan},
 pages = {16481--16493},
 publisher = {Curran Associates, Inc.},
 title = {Self-Supervised Representation Learning on Neural Network Weights for Model Characteristic Prediction},
 url = {https://proceedings.neurips.cc/paper_files/paper/2021/file/89562dccfeb1d0394b9ae7e09544dc70-Paper.pdf},
 volume = {34},
 year = {2021}
}

@article{thiel2023generative,
  title={Generative ML and CSAM: Implications and mitigations},
  author={Thiel, David and Stroebel, Melissa and Portnoff, Rebecca},
  journal={Stanford Digital Repository},
  year={2023}
}

@misc{thornsafetybydesign,
title={Safety by Design for Generative AI: Preventing Child Sexual Abuse},
howpublished={\url{https://info.thorn.org/hubfs/thorn-safety-by-design-for-generative-AI.pdf}},
author={Thorn and {All Tech is Human}},
year=2024
}

@misc{ncmec2025cybertipline,
  title={2024 CyberTipline Report},
  author={National Center for Missing \& Exploited Children},
  year={2025},
  howpublished={\url{https://www.missingkids.org/content/dam/missingkids/pdfs/cybertiplinedata2024/2024-CyberTipline-Report.pdf}},
  publisher={National Center for Missing \& Exploited Children (NCMEC)}
}

@misc{thornprogressreport,
  title={Safety by Design Three-Month Progress Report \#3: November 2024 to January 2025},
  author={Thorn},
  howpublished="\url{https://info.thorn.org/hubfs/Thorn-SafetybyDesign-ThreeMonthProgressReport-3.pdf?}",
  year={2025},
  publisher={Thorn}
}

@inproceedings{yuan2025lurks,
  title={What Lurks Within? Concept Auditing for Shared Diffusion Models at Scale},
  author={Yuan, Xiaoyong and Ma, Xiaolong and Guo, Linke and Zhang, Lan},
  booktitle={Proceedings of the 2025 ACM SIGSAC Conference on Computer and Communications Security},
  pages={4154--4168},
  year={2025}
}

@article{openproblemschildsafety2026,
  title={Child Safety Necessitates New Approaches to AI Safety},
  author={Kale*, Neil and Portnoff*, Rebecca and Thaker, Pratiksha and Simpson, Michael and Wang, Robertson and Kuo, Kevin and Yadav, Chhavi and Smith, Virginia},
  year={2026},
  url={https://papers.ssrn.com/sol3/papers.cfm?abstract_id=6206860},
  note={* Equal contribution}
}

@INPROCEEDINGS{lyu2024,
  author={Lyu, Mengyao and Yang, Yuhong and Hong, Haiwen and Chen, Hui and Jin, Xuan and He, Yuan and Xue, Hui and Han, Jungong and Ding, Guiguang},
  booktitle={2024 IEEE/CVF Conference on Computer Vision and Pattern Recognition (CVPR)}, 
  title={One-dimensional Adapter to Rule Them All: Concepts, Diffusion Models and Erasing Applications}, 
  year={2024},
  volume={},
  number={},
  pages={7559-7568},
  doi={10.1109/CVPR52733.2024.00722}}

@misc{civitaiSPM,
	author = {Civitai},
	title = {{I}ntroducing {C}ivitai {G}reen: {O}ur {C}ontinued {C}ommitment to {C}ommunity {S}afety | {C}ivitai --- civitai.com},
	howpublished = {\url{https://civitai.com/articles/7078/introducing-civitai-green-our-continued-commitment-to-community-safety}},
	year = {2024},
	note = {[Accessed 21-04-2026]},
}
